%% file: main.tex
\documentclass[11pt]{article}
\usepackage{amsmath,amssymb,amsthm}
\usepackage[margin=1in]{geometry}
\usepackage{booktabs}
\usepackage[numbers,sort&compress]{natbib}
\usepackage{mathtools}
\usepackage{hyperref}
\usepackage{wrapfig}
\usepackage{hyperref}
\usepackage{float}

\newtheorem{theorem}{Theorem}
\newtheorem{lemma}[theorem]{Lemma}

\newtheorem{proposition}[theorem]{Proposition}
\newtheorem{definition}[theorem]{Definition}
\newtheorem{remark}[theorem]{Remark}

\title{Not all uncertainty is alike: volatility, stochasticity, and exploration}
\author{Payam Piray\\
\small Department of Psychology, University of Southern California\\
\small \texttt{piray@usc.edu}}
\date{}

\begin{document}
\maketitle

\input{0_abstract}
\input{1_intro}
\input{2_related}
\input{3_problem}
\input{4_motonicity}
\input{5_index}
\input{6_results}
\input{7_discussion}

\bibliographystyle{unsrtnat}
{
\small
\bibliography{references_cause}
}

\newpage
\appendix
\input{Appendix_A}
\input{Appendix_B}
\input{Appendix_C}
\input{Appendix_D}
\input{Appendix_E}

\end{document}

%% file: 0_abstract.tex
\begin{abstract}
Adaptive decision-making in biological and artificial intelligence requires balancing the exploitation of known outcomes with the exploration of uncertain alternatives. Although prior work suggests that uncertainty generally promotes exploration, it has typically treated distinct sources of environmental uncertainty as equivalent. We consider environments with latent reward states that drift over time (volatility) and are observed through noisy outcomes (stochasticity). Both increase posterior uncertainty, yet we show they drive optimal exploration in opposite directions: volatility enhances it, stochasticity suppresses it. We establish this asymmetry formally by extending the Gittins index framework to Gaussian state-space bandits with latent dynamics. We further derive Cause-Aware Uncertainty-Sensitive Exploration (CAUSE), a closed-form exploration bonus obtained via control-as-inference that inherits the same monotonicities. CAUSE outperforms standard exploration strategies in environments with heterogeneous noise structure, and also improves on a Gittins-per-arm policy whose rested-bandit optimality does not transfer to restless settings. Learning and exploration are governed by the same noise-inference asymmetry, and the framework predicts that pathological noise inference produces \emph{reversed} rather than merely impaired exploration, with implications for computational accounts of psychiatric conditions.
\end{abstract}

%% file: 1_intro.tex
\section{Introduction}

Across computational accounts of exploration, a recurring principle is that uncertainty drives behavior: greater uncertainty, more exploration. This logic underlies a broad family of exploration strategies: upper confidence bound (UCB) methods explore by acting optimistically under uncertainty \citep{auer2002finite, lai1985asymptotically}; Thompson sampling explores by sampling from the posterior over rewards \citep{thompson1933likelihood}; information-directed sampling balances expected regret against expected information gain \citep{russo2014learning}; and neuroscience work links exploration to neural and behavioral signatures of uncertainty \citep{wilson2014humans, gershman2019uncertainty, daw2006cortical, frank2009prefrontal}. In each case, more uncertainty prescribes more exploration.

We show that this prescription is incomplete. Not all uncertainty is alike, and some forms of uncertainty should \emph{decrease} exploration. Specifically, we consider environments with latent reward states that change over time (volatility) and are observed through noisy outcomes (stochasticity). Both volatility and stochasticity increase posterior uncertainty about rewards, and standard methods such as UCB and Thompson sampling treat them equivalently, prescribing more exploration in both cases. We prove that they drive optimal exploration in opposite directions: volatility enhances exploration, while stochasticity suppresses it.

The intuition is that uncertainty matters for exploration only insofar as it reflects potential information gain. Volatility creates information gain: latent states are changing, so new observations reveal something new. Stochasticity destroys it: observations are noisy, so each sample is less informative. An agent that simply explores more whenever it is more uncertain conflates these two cases, exploring vigorously precisely when observations are least useful.

We establish this result formally by extending the Gittins index framework from independent and identically distributed (iid) Gaussian bandits to Gaussian state-space bandits, where each arm has a latent reward state that evolves with volatility and is observed with stochasticity. In this richer setting, we prove that the optimal exploration bonus, defined as the value of information that justifies pulling an arm beyond its expected reward, is monotonically increasing in volatility and decreasing in stochasticity. This result follows from the structure of the Gittins retirement problem and requires no approximations.

The Gittins index is provably optimal only for rested bandits. In restless settings, where unselected arms also evolve, Gittins loses its optimality guarantee and the truly optimal policy is intractable in general. To obtain a practical exploration strategy, we turn to the control-as-inference framework \citep{levine2018reinforcement, todorov2008general}, which casts action selection as posterior inference in a probabilistic graphical model. Within this framework, we derive \emph{Cause-Aware Uncertainty-Sensitive Exploration} (CAUSE), a closed-form index policy for Gaussian state-space bandits. CAUSE decomposes cleanly into exploitation and exploration components, and its exploration component inherits the same qualitative dependence on volatility and stochasticity established by the Gittins analysis. To our knowledge, this is the first application of control-as-inference to derive an exploration index for restless bandits.

These opposing effects parallel a known result in the learning literature. Optimal learning rates increase with volatility and decrease with stochasticity, because the same information-theoretic logic applies: volatility makes new observations more relevant, stochasticity makes them less reliable \citep{piray2021model}. The parallel is not coincidence: both are normative properties of the same generative model.

This unified view yields a testable prediction about pathological behavior. Volatility and stochasticity must be jointly inferred from observations, since both increase experienced noise and can only be disentangled by explaining away. A priori insensitivity to one source leads to systematic misattribution of the other \citep{piray2021model}: an agent with hyposensitive stochasticity priors sees the world as more volatile than it is, and vice versa. In the learning setting, such miscalibration produces \emph{reversed} learning-rate modulation, a pattern observed in human behavior \citep{piray2024computational}. We show that the analogous failure in exploration produces reversed exploratory behavior: stochasticity-blind agents over-explore precisely when observations are least informative, while volatility-blind agents under-explore precisely when the environment is changing fastest. Miscalibrated noise inference has been proposed as a candidate mechanism in several psychiatric conditions, making these reversals a concrete behavioral target for clinical investigation.

Our contributions are: (i) we extend the Gittins framework from iid Gaussian bandits to Gaussian state-space bandits with latent dynamics, proving that the optimal exploration bonus increases monotonically with volatility and decreases monotonically with stochasticity; (ii) within the control-as-inference framework, we derive CAUSE, a closed-form index policy for Gaussian state-space bandits, which to our knowledge is the first closed-form index for restless bandits with continuous-state Gaussian dynamics and the first application of control-as-inference to derive such an index; (iii) we show that the opposing effects of volatility and stochasticity on exploration parallel those on learning, both following from the normative properties of the same generative model; (iv) empirically, CAUSE outperforms Thompson sampling, UCB, predictive sampling, and a numerical Gittins-per-arm baseline across volatility and stochasticity regimes, and matches the optimal Gittins reference in the rested limit; (v) lesioned inference produces reversed, not merely impaired, patterns of exploration, with the direction of reversal diagnostic of which latent inference has failed. 
Code will be released upon publication.

%% file: 2_related.tex
\section{Related work}
\paragraph{Exploration and uncertainty in neuroscience.}
Behavioral and neural studies have linked exploratory behavior to uncertainty signals, including relative and total uncertainty \citep{wilson2014humans}, expected and unexpected uncertainty \citep{cohen2007should}, and information value \citep{daw2006cortical, gershman2018deconstructing}. These accounts generally predict that exploration increases with uncertainty, without distinguishing between the sources of that uncertainty. Our framework predicts that volatility and stochasticity should have opposing effects on exploratory behavior, and that selective failures of inference about these quantities should produce characteristic reversals rather than uniform deficits. To our knowledge, these qualitative predictions have not been tested experimentally.

\paragraph{Volatility, stochasticity, and adaptive learning.}
Adaptive learning under uncertainty has been modeled using hierarchical Bayesian frameworks that track volatility \citep{behrens2007learning, mathys2011bayesian} or detect change-points \citep{nassar2010approximately, nassar2012rational}. These approaches typically estimate one source of uncertainty while treating the other as known, overlooking the computational challenge of their simultaneous estimation. \citet{piray2021model} introduced a framework for jointly estimating volatility and stochasticity, showing that although both increase outcome variance, they can be distinguished through temporal structure and exert opposing effects on learning rate. The same work showed that selective failures to represent one source produce reversed, rather than merely impaired, learning-rate modulation, offering a candidate account of pathological learning in psychiatric and neurological conditions. This framework was extended to binary outcomes and validated in large-scale behavioral experiments \citep{piray2024computational}. Our work builds on this foundation, showing that the same dissociation drives exploration as a normative result, with analogous lesion-model predictions for exploratory behavior.

\paragraph{Exploration in bandits.}

The explore/exploit dilemma has been studied extensively in the multi-armed bandit literature. UCB methods \citep{auer2002finite, lai1985asymptotically} and Thompson sampling \citep{thompson1933likelihood, daniel2018tutorial} both increase exploration with uncertainty, the former through optimistic reward estimates and the latter through posterior sampling; a Bayesian variant of UCB \citep{kaufmann2012bayesian} replaces frequentist confidence bounds with posterior credible intervals, the version we use as a baseline in our experiments. The Gittins index \citep{gittins1979bandit} provides an optimal decomposition for discounted rested bandits, and is well characterized for Gaussian iid rewards with an unknown mean estimated from noisy observations \citep{yao2006}. Whittle's index \citep{whittle1988restless} extends index-based policies to restless settings where latent states evolve over time, though it is typically computable only through approximation, and exact optimal policies for restless bandits are PSPACE-hard \citep{papadimitriou1999complexity}. Information-directed sampling \citep{russo2014learning} explicitly targets uncertainty reduction, allocating exploration to maximize information gain per unit of regret; unlike index-based approaches, however, it requires solving an auxiliary optimization problem at each step and does not yield comparative statics in closed form. \citet{russo2018satisficing} propose satisficing Thompson sampling for time-sensitive bandit learning, which samples against a relaxed near-optimal target rather than the exact optimum, and \citet{liu2023nonstationary} extend Thompson sampling to bandits with mean-reverting non-stationary rewards via predictive sampling, which down-weights information that loses value due to drift. Both extend the posterior-sampling framework rather than producing closed-form indices. Predictive sampling is the closer methodological neighbor to our work, but it operates in a different setting (mean-reverting AR(1) rewards) and does not yield the closed-form asymmetry between volatility and stochasticity that we establish for random-walk dynamics. We extend the Gittins analysis to Gaussian state-space bandits with latent random-walk dynamics, and show that the relationship between uncertainty and optimal exploration is not monotonic: volatility promotes exploration while stochasticity suppresses it. We further derive CAUSE, a tractable closed-form index policy that inherits this structure.

\paragraph{Control as inference.}
The connection between optimal control and probabilistic inference has a long history \citep{todorov2008general, kappen2012optimal, levine2018reinforcement}. In this framework, optimal actions are obtained by conditioning on high reward in a probabilistic graphical model, yielding policies that are regularized by their divergence from a prior policy. This formulation has been applied in reinforcement learning \citep{abdolmaleki2018maximum, haarnoja2018soft}, robot planning \citep{toussaint2009robot}, and human planning \citep{botvinick2012planning}. We leverage this framework to derive a tractable closed-form index for restless bandits with volatility and stochasticity, a setting where the Gittins index characterizes optimal behavior but does not yield a computable policy.
\paragraph{Pathological inference in computational psychiatry.}
Miscalibrated learning-rate adjustment in response to volatility has been proposed as a candidate mechanism in several psychiatric conditions. In anxiety, overestimation of volatility or environmental instability has been linked to inappropriately high learning rates and over-reactivity to noise \citep{browning2015anxious, huang2017computational}, although a recent study has produced inconsistent results \citep{satti2025absence}. In psychosis, miscalibrated precision weighting has been linked to delusional inference \citep{powers2017pavlovian}. In depression, insensitivity to environmental change has been linked to impaired updating of beliefs about reward \citep{pulcu2017affective}. Our framework contributes to this program by providing the exploration-side analogue, with the reversal pattern characterized in Section~\ref{sec:lesion}.

%% file: 3_problem.tex
\section{Problem setting}
\label{sec:problem}

We consider a restless multi-armed bandit in which each arm's reward is driven by an independent latent state $x_t$ that drifts over time as a Gaussian random walk with innovation variance $v > 0$ (\emph{volatility}). When pulled at time $t$, the arm yields a noisy reward $r_t = x_t + \epsilon_t$ with $\epsilon_t \sim \mathcal{N}(0, s)$ and observation noise variance $s > 0$ (\emph{stochasticity}). The innovation and observation noises are independent across time, and the prior is $x_0 \sim \mathcal{N}(m_0,\, P_0)$. The setting is restless: all arms drift regardless of whether they are pulled. The agent's goal is to maximize the expected discounted return $\mathbb{E}\!\left[\sum_{t=1}^{\infty} \gamma^{t-1} r_t\right]$, where $\gamma \in (0,1)$. Throughout Sections~\ref{sec:optimal} and~\ref{sec:cause}, $v$ and $s$ are treated as known parameters of each arm; Section~\ref{sec:lesion} considers the case in which they are inferred from observations.

To make informed decisions, the agent must track the latent states from noisy observations. The posterior over each arm's state, $p(x_t \mid r_{1:t}) = \mathcal{N}(m_t,\, P_t)$, is updated via the Kalman filter:
\begin{align}
  m_t &= m_{t-1} + K_t\,(r_t - m_{t-1}),
  \label{eq:mean_update} \\
  P_t &= (1-K_t)(P_{t-1} + v),
  \label{eq:var_update}
\end{align}
where $K_t = (P_{t-1} + v) / (P_{t-1} + v + s)$ is the Kalman gain. For arms not pulled, $m_t = m_{t-1}$ and $P_t = P_{t-1} + v$: the mean is unchanged but uncertainty grows by $v$ at every time step.

%% file: 4_motonicity.tex
\section{Volatility, stochasticity, and optimal exploration}
\label{sec:optimal}

We now establish the paper's central theoretical claims: that the exploration bonus depends oppositely on stochasticity and volatility. We first set up the retirement problem and decompose the index, then prove monotonicity in each direction.

\subsection{Exploration bonus via the retirement problem}
\label{sec:index}

To quantify the value of exploring a single arm, we consider a classical thought experiment \citep{gittins1979bandit}. An agent faces a single arm and, at each period, chooses between pulling it (receiving a noisy reward $r_t$ and updating her belief via the Kalman filter) or \emph{retiring} to collect a fixed salary $\lambda$ in perpetuity. The agent may pull as many times as she wishes before retiring, but cannot return once she leaves. The \emph{index} of the arm is the break-even salary: the smallest $\lambda$ at which immediate retirement is optimal. Intuitively, this is the agent's indifference price for giving up future access to the arm, and therefore measures the arm's total value. Formally,
\begin{equation}\label{eq:retire}
  \frac{\lambda}{1 - \gamma}
  \;=\;
  \sup_{\tau \ge 0}\;
  \mathbb{E}\!\left[\;
    \sum_{n=1}^{\tau} \gamma^{n-1}\, r_n
    \;+\;
    \gamma^{\tau}\, \frac{\lambda}{1-\gamma}
  \;\right],
\end{equation}
where the supremum is over all stopping times $\tau$ adapted to the observations. This construction isolates the value of a single arm, independent of the rest of the bandit. In rested bandits, pulling the arm with the highest index is optimal \citep{gittins1979bandit}; in the restless setting, the retirement-problem index remains a well-defined per-arm quantity, and its dependence on $s$ and $v$ is the subject of our theoretical analysis.

The Gaussian structure of the model permits a decomposition of the index into exploitation and exploration components.

\begin{proposition}[Index Decomposition]
\label{prop:decomposition}
The index $\lambda$ admits the decomposition:
\begin{equation}\label{eq:decompose_main}
  \lambda(m,\, P,\, s,\, v,\, \gamma) 
  = m + B(P,\, s,\, v,\, \gamma),
\end{equation}
where $B(P, s, v, \gamma) \coloneqq \lambda(0, P, s, v, \gamma) \ge 0$ is the \textbf{exploration bonus}: the option value of the information gained by pulling.
\end{proposition}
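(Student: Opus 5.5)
The plan is to prove the decomposition by a translation-equivariance argument, followed by a separate argument for nonnegativity of the bonus.

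\textbf{Translation equivariance.} Fix an arm with prior state $(m,P)$. Write the latent process as $x_t = m + \tilde x_t$, where $\tilde x_t$ is the same process started from mean $0$ and variance $P$. The observations shift in the same way: $r_t = m + \tilde r_t$. The Kalman recursions \eqref{eq:mean_update}--\eqref{eq:var_update} then give:
\begin{itemize}
  \item the variance sequence $P_t$ does not depend on the observations, and hence not on $m$;
  \item the mean satisfies $m_t = m + \tilde m_t$, since the update is affine with unit coefficient on the previous mean and the innovation $r_t - m_{t-1} = \tilde r_t - \tilde m_{t-1}$ is invariant.
\end{itemize}
It follows that the filtrations generated by $r_{1:t}$ and by $\tilde r_{1:t}$ coincide, because they differ by the known constant $m$. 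So the class of admissible stopping times is the same in the two problems.

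\textbf{Shifting the retirement problem.} Substitute $r_n = m + \tilde r_n$ into \eqref{eq:retire} and use the identity
\[
\sum_{n=1}^{\tau}\gamma^{n-1} m + \gamma^\tau \frac{m}{1-\gamma} = \frac{m}{1-\gamma}.
\]
This yields, for every stopping time $\tau$,
\[
\mathbb{E}\Big[\sum_{n=1}^{\tau}\gamma^{n-1} r_n + \gamma^\tau\tfrac{\lambda}{1-\gamma}\Big]
= \frac{m}{1-\gamma} + \mathbb{E}\Big[\sum_{n=1}^{\tau}\gamma^{n-1}\tilde r_n + \gamma^\tau\tfrac{\lambda-m}{1-\gamma}\Big].
\]
Taking the supremum over $\tau$ shows that $\lambda$ solves the fixed-point equation for $(m,P)$ if and only if $\lambda-m$ solves it for $(0,P)$. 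Hence $\lambda(m,P,s,v,\gamma) = m + \lambda(0,P,s,v,\gamma)$.

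\textbf{Uniqueness of the break-even salary.} To make ``the index'' well defined I will use the characterization as the smallest $\lambda$ at which immediate retirement ($\tau=0$) is optimal. Set $G(\lambda) = \sup_\tau \mathbb{E}[\cdots] - \lambda/(1-\gamma)$. Then:
\begin{itemize}
  \item $G$ is nonincreasing and continuous in $\lambda$. Each stopping time contributes $\mathbb{E}[\cdots] - \lambda/(1-\gamma)$, whose $\lambda$-dependence is $-\tfrac{1-\gamma^\tau}{1-\gamma}\lambda$ in expectation, a nonincreasing affine function; a supremum of such functions with slopes in $[-1/(1-\gamma),0]$ is Lipschitz.
  \item $G \ge 0$, since $\tau=0$ is always admissible.
\end{itemize}
So the index is $\inf\{\lambda : G(\lambda)=0\}$. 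The shift above maps this set for $(m,P)$ bijectively onto the set for $(0,P)$ via $\lambda \mapsto \lambda - m$, so the infima correspond. Finiteness needs integrability, which holds because $\mathbb{E}|r_n|$ grows only like $\sqrt{n}$ under the random walk, and discounting makes every sum converge.

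\textbf{Nonnegativity, $B\ge0$.} Take $m=0$ and any $\lambda<0$. Consider the policy that pulls exactly once and then retires. Its value is $\mathbb{E}[r_1] + \gamma\lambda/(1-\gamma) = 0 + \gamma\lambda/(1-\gamma)$, which is strictly larger than $\lambda/(1-\gamma)$ because $\lambda<0$. So immediate retirement is not optimal for any $\lambda<0$, and therefore $\lambda(0,P,\dots)\ge 0$.

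The main obstacle I anticipate is the technical bookkeeping rather than the algebra. One must confirm that the supremum over possibly unbounded stopping times is finite and that the index is well defined. This is the step that uses the $\sqrt{n}$ growth of $\mathbb{E}|r_n|$ together with the discounting.
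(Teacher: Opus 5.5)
Your translation-equivariance argument is correct and is essentially the paper's proof. The paper also shifts the latent state by a constant, notes the filtration is unchanged, and substitutes $r_n=\tilde r_n+a$ into the retirement equation, turning salary $\lambda$ at mean $m+a$ into salary $\lambda-a$ at mean $m$. Your additional steps are sound and fill in details the paper leaves implicit: you track the smallest break-even salary through the nonincreasing Lipschitz function $G$, and you use the one-pull policy to show $B\ge 0$, which the paper only asserts.
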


Proven in Appendix~\ref{app:proofs} via shift-coupling of the latent state, the decomposition isolates an exploration term $B$ that depends only on the uncertainty structure. The rest of this section analyzes $B$.

For iid Gaussian bandits with an unknown mean and known observation noise, \citet{yao2006} established that the exploration bonus is nonincreasing in the observation variance and nondecreasing in the prior variance. This setting is the $v=0$ special case of our model. We show below that both monotonicities continue to hold once drift is introduced, and we establish a third monotonicity in $v$ itself, which has no iid counterpart. The opposing effects of $s$ and $v$ on optimal exploration, which is the paper's central structural claim, cannot be drawn within iid models.

\subsection{Optimal exploration decreases with stochasticity}
\label{sec:mono_s}

\begin{theorem}[Monotonicity in $s$]
\label{thm:mono_s}
The exploration bonus $B(P, s, v, \gamma)$ is nonincreasing in the observation noise $s$.
\end{theorem}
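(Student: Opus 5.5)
We will prove the claim by a \emph{garbling} (coupling) argument: an agent facing noise $s' > s$ can be simulated by an agent facing noise $s$ who deliberately adds independent noise to her observations. By Proposition~\ref{prop:decomposition} it suffices to work with $m=0$ and compare the optimal values of the retirement problem~\eqref{eq:retire} at noise levels $s$ and $s'$ for a fixed salary $\lambda$. Fix $s<s'$, a salary $\lambda$, and the latent process $x_0,x_1,\dots$ with its volatility $v$. Let the low-noise observations be $r_n = x_n + \epsilon_n$ with $\epsilon_n\sim\mathcal N(0,s)$. Define the degraded observations $\tilde r_n = r_n + \eta_n$, where $\eta_n \sim \mathcal N(0, s'-s)$ are drawn independently of everything else. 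Then $\tilde r_n = x_n + \tilde\epsilon_n$ with $\tilde\epsilon_n\sim\mathcal N(0,s')$ iid and independent of the latent path. So $(x,\tilde r)$ has exactly the law of the noise-$s'$ model.

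The first key step is to show that any stopping time $\tilde\tau$ adapted to $\tilde r$ is a randomized stopping time for the filtration generated by $r$ together with the auxiliary randomness $\eta$. The second key step is to show that the payoff is unchanged by the garbling. Since $\mathbb E[\tilde r_n \mid \mathcal F_{n-1}] = \mathbb E[x_n\mid\mathcal F_{n-1}] = \mathbb E[r_n\mid \mathcal F_{n-1}]$ for the enlarged filtration, the expected discounted reward collected before stopping is the same whether we count $r_n$ or $\tilde r_n$. This follows by optional stopping, using $\gamma<1$ and Gaussian integrability to justify it. Hence every policy available at noise $s'$ yields the same value as some (randomized) policy at noise $s$. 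The value of the retirement problem at $s$ with randomization allowed is then at least the value at $s'$.

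The third step removes the randomization. Independent external randomness cannot improve an optimal stopping problem: conditional on $\eta$, the randomized rule is an ordinary $r$-adapted rule, since $\eta$ is independent of $(x,r)$. The value is therefore an average of values of non-randomized rules, each at most the noise-$s$ optimum. We conclude that for every $\lambda$, the noise-$s$ value is at least the noise-$s'$ value. Now take $\lambda = \lambda(0,P,s',v,\gamma)$. At this salary, continuing is weakly worthwhile under $s'$, and therefore also under $s$. Since the difference between the value and $\lambda/(1-\gamma)$ is nonincreasing in $\lambda$, the break-even point satisfies $\lambda(0,P,s,v,\gamma)\ge\lambda(0,P,s',v,\gamma)$. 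That is, $B$ is nonincreasing in $s$.

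The main obstacle is the interplay between the filtration and the restless latent dynamics. We need the conditional mean of $r_n$ given the enlarged past $\sigma(r_{1:n-1},\eta_{1:n-1})$ to equal the posterior mean used by the noise-$s$ Kalman filter. This requires that $\eta$ carries no information about future $x$, which holds by construction. We also need the retirement value to be well defined and the infimum characterization of the index (monotone crossing in $\lambda$) to hold. I would handle these by truncating to finite horizons and passing to the limit with dominated convergence, since $\gamma^{n}$ dominates the linear growth of $\mathbb E|x_n|$.
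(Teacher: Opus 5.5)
Your proposal is correct and is essentially the paper's proof. Both adapt Yao's randomization argument: add independent $\mathcal{N}(0,s'-s)$ noise to the clean observations, so that any stopping rule for the noisier arm becomes a randomized stopping rule for the cleaner arm. The paper then bounds the supremum directly in the ratio form of the index, whereas you compare fixed-salary values and spell out the payoff-equivalence and derandomization steps that the paper leaves implicit.

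The only thing to tighten is the last step, because ``weakly worthwhile'' at the break-even salary is vacuous. Say instead that $V_s \ge V_{s'} \ge \lambda/(1-\gamma)$ for every $\lambda$, so any salary at which immediate retirement is optimal under $s$ is also one under $s'$. This gives $\lambda(0,P,s,v,\gamma)\ge\lambda(0,P,s',v,\gamma)$.
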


The proof, given in Appendix~\ref{app:proofs}, proceeds by coupling: a noisier arm is constructed from a cleaner one by adding independent Gaussian noise to each observation. Any stopping rule based on noisy observations can be simulated by the clean-arm agent (who can draw the extra noise internally), but not conversely, so the noisy agent optimizes over a strictly smaller class of decision rules. Since the two arms share the same latent state and reward process, both agents are optimizing the same objective; the noisy agent simply has access to fewer decision rules, so the supremum is no larger, and the monotonicity follows. Because stochasticity enters only through the observation channel, which is active only when the arm is pulled, this argument does not rely on whether unselected arms drift; the result holds in both rested and restless settings.





\subsection{Optimal exploration increases with volatility}
\label{sec:mono_v}

\begin{theorem}[Monotonicity in $v$]
\label{thm:mono_v}
The exploration bonus $B(P, s, v, \gamma)$ is nondecreasing in the innovation variance $v$.
\end{theorem}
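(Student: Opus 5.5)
The plan is to reduce the retirement problem \eqref{eq:retire} to an optimal stopping problem driven by the posterior-mean martingale. Then $v$ enters only through the deterministic variances of that martingale's increments, and a convexity (Jensen) argument shows that larger increment variances can only raise the value of continuing. By Proposition~\ref{prop:decomposition} it suffices to work at $m=0$ and show that, for every $\lambda$, the value of the stopping problem is nondecreasing in $v$. The index $B(P,s,v,\gamma)=\lambda(0,P,s,v,\gamma)$ is the smallest $\lambda$ at which immediate retirement is optimal, so if continuing is weakly more attractive under $v'>v$ at every $\lambda$, then the break-even $\lambda$ can only increase.

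\textbf{Step 1 (martingale reformulation).} Let $\mathcal F_{n}$ be generated by $r_{1:n}$. Since $\mathbb E[r_n\mid\mathcal F_{n-1}]=\mathbb E[x_n\mid\mathcal F_{n-1}]=m_{n-1}$ (the random walk has zero-mean innovations), the tower property rewrites the objective as
\begin{equation*}
\mathbb E\Bigl[\textstyle\sum_{n=1}^{\tau}\gamma^{n-1}m_{n-1}+\gamma^{\tau}\tfrac{\lambda}{1-\gamma}\Bigr].
\end{equation*}
Under continued pulling, $m_n=m_{n-1}+\sigma_n Z_n$ with $Z_n$ iid standard normal. Writing $Q_n=P_{n-1}+v$ for the predictive variance, the increment variance is $\sigma_n^2=Q_n^2/(Q_n+s)$, which is deterministic. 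The state of the problem is therefore $(m,P)$, and $P$ evolves deterministically.

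\textbf{Step 2 (variances are monotone in $v$).} I would show by induction that $Q_n(v)$ is nondecreasing in $v$ for every $n$. For the base case, $Q_1=P_0+v$. For the inductive step, $P_n=Q_ns/(Q_n+s)$ is increasing in $Q_n$, so $Q_{n+1}=P_n+v$ is increasing in both $v$ and $Q_n$. Since $Q\mapsto Q^2/(Q+s)$ is increasing, each $\sigma_n^2$ is nondecreasing in $v$.

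\textbf{Step 3 (value is monotone in the variances).} Define the Bellman operator
\begin{equation*}
(TV)(m,P)=\max\Bigl\{\tfrac{\lambda}{1-\gamma},\;m+\gamma\,\mathbb E\,V\bigl(m+\sigma(P)Z,\,P'\bigr)\Bigr\}.
\end{equation*}
Starting from $V_0\equiv\lambda/(1-\gamma)$, the iterates are convex and nondecreasing in $m$, because affine shifts, Gaussian convolution and pointwise maxima preserve convexity. For convex $V$, $\mathbb E V(m+\sigma Z)$ is nondecreasing in $\sigma$ by Jensen: write $\sigma' Z\overset{d}{=}\sigma Z+\sqrt{\sigma'^2-\sigma^2}Z'$ and condition on $Z$. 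Combining this with Step 2 gives, for each $k$-step truncation, $V_k^{v'}(m,P)\ge V_k^{v}(m,P)$, and the comparison is propagated through the sequence of states along each trajectory. Let $k\to\infty$. Rewards are dominated by $\sum\gamma^{n-1}|m_{n-1}|$, whose expectation is finite because the variances grow at most linearly, so the value iterates converge to the true value. Hence $V^{v'}\ge V^{v}$ pointwise, and at $m=0$ retirement being optimal under $v'$ implies it is optimal under $v$, so $B$ is nondecreasing in $v$.

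The main obstacle is Step 3. The comparison must be made between two nonstationary problems whose $P$-trajectories differ, so the induction has to be phrased over the whole deterministic sequences $\{\sigma_n^2(v)\}$ and not at a fixed $P$. The first thing I would try is to index the $k$-horizon value functions by time, $V_k^{(n)}(m)$, and prove $V_k^{(n),v'}\ge V_k^{(n),v}$ by backward induction on $n$. Convexity in $m$ carries through the induction and the Jensen step is applied at each stage. The second delicate point is justifying the infinite-horizon limit given unbounded Gaussian rewards, which I would handle via the linear growth of $Q_n$ together with $\gamma<1$.
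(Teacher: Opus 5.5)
Your argument is correct, but it is organized differently from the paper's.

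\textbf{The paper's route.} The paper works with the stationary Bellman equation in the state $(m,P)$. Inside a value-iteration induction, it chains three inequalities for the continuation value: Jensen for the wider spread $\sigma^2(v)$ of $m'$, Lemma~\ref{lem:mono_P} for the larger next-step variance $P'(v)$, and the inductive hypothesis for the $v$-dependence of the next-period value function itself.

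\textbf{Your route.} You note that $v$ enters only through the deterministic increment variances $\sigma_n^2(v)$ of the posterior-mean martingale. Step~2 shows this whole sequence is componentwise nondecreasing in $v$. A time-indexed backward induction then shows the stopping value is nondecreasing in each increment variance, so each stage needs only Jensen plus the hypothesis at the next stage.

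\textbf{What your route buys.}
First, you never need Lemma~\ref{lem:mono_P}. The paper applies it to the value iterates $V$, although it is stated for $B$ and only sketched. Your comparison result in fact yields that lemma as a corollary, since every $Q_n$ is also increasing in $P_0$.
Second, the mismatch between the two $P$-trajectories is handled explicitly, not through a joint hypothesis on convexity and on monotonicity in both $P$ and $v$.
Third, you justify the passage from finite-horizon iterates to the infinite-horizon value, which the paper leaves implicit.

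\textbf{One point to tighten.} In Step~3 you first phrase the comparison as $V_k^{v'}(m,P)\ge V_k^{v}(m,P)$ with a stationary state $(m,P)$. In that form the induction does not close without monotonicity in $P$, which is exactly the paper's second inequality. The time-indexed induction in your final paragraph is what makes the argument rigorous, so write it in that form.

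\textbf{What the paper's route buys.} It stays with the same $(m,P)$ recursion used in its numerical value iteration. It also separates the effect of $v$ into interpretable channels (spread of $m'$, replenished $P'$, continued drift), which its intuition paragraph relies on.
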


Where stochasticity degrades the signal, volatility creates opportunity. The proof, given in Appendix~\ref{app:proofs}, rests on a supporting lemma that is of independent interest:

\begin{lemma}[Monotonicity in $P$]
\label{lem:mono_P}
The exploration bonus $B(P, s, v, \gamma)$ is nondecreasing in the posterior variance $P$.
\end{lemma}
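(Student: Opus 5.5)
The plan is to prove Lemma~\ref{lem:mono_P} by a garbling/coupling argument in the spirit of Theorem~\ref{thm:mono_s}, with the noise now added to the \emph{prior mean} rather than to the observations. By Proposition~\ref{prop:decomposition} it suffices to compare $B(P)=\lambda(0,P,s,v,\gamma)$ with $B(P')$ for $P'>P$. Write $\Delta=P'-P$. A belief $\mathcal{N}(m,P')$ has the same law over all future observations as a two-stage model: first draw $\tilde m\sim\mathcal{N}(m,\Delta)$, then $x_0\sim\mathcal{N}(\tilde m,P)$. An agent whose prior is $\mathcal{N}(0,P')$ therefore faces, in distribution, the problem of an agent with prior $\mathcal{N}(\tilde m,P)$ whose mean $\tilde m$ is random and unobserved. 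Conversely, an ``informed'' agent who is told $\tilde m$ at the outset faces the problem with prior $\mathcal{N}(\tilde m,P)$, and so has index $\tilde m+B(P)$.

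Next I would fix $\lambda$ and write $V(m,P;\lambda)$ for the value of the retirement problem~\eqref{eq:retire}, that is, the supremum over stopping times. Since $V$ is a supremum over $\tau$ of expressions that are affine in the prior mean after the shift-coupling used for Proposition~\ref{prop:decomposition}, $V(\cdot,P;\lambda)$ is convex in $m$. The informed agent can use any stopping rule available to the uninformed one, because the uninformed filtration is coarser. This gives the reverse of the naive inequality, $V(0,P';\lambda)\le \mathbb{E}\,V(\tilde m,P;\lambda)$, which points the wrong way. So the informed/uninformed comparison alone does not settle the lemma, and the argument has to run in the other direction.

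The right comparison, and the step I expect to be the main obstacle, is the following. Under $\mathcal{N}(0,P')$ the Kalman filter started from $P'$ generates posterior means whose sequential law, given the observations, matches what the $P$-agent would compute, except that the $P'$-agent's means move more (the gain is larger) and its posterior variances dominate, $P'_t\ge P_t$, at every $t$ (the Riccati map is monotone). My first attempt would be a Bellman/value-iteration induction. Define the operator
\[
T V(m,P)=\max\Bigl\{\tfrac{\lambda}{1-\gamma},\; m+\gamma\,\mathbb{E}\,V\bigl(m+\sigma(P)Z,\;f(P)\bigr)\Bigr\},
\]
where $f(P)=(1-K)(P+v)$ and $\sigma(P)^2=K(P+v)$ is the variance of the mean update, with $Z\sim\mathcal{N}(0,1)$. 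The induction hypothesis is that $V_k$ is convex in $m$ and nondecreasing in $P$. Both properties are preserved by $T$. Convexity is preserved because a Gaussian average of a convex function is convex and a maximum of convex functions is convex. Monotonicity in $P$ is preserved because $\sigma(P)^2=(P+v)^2/(P+v+s)$ is increasing in $P$; by convexity, a larger spread of the mean increment raises $\mathbb{E}V$. Moreover $f$ is increasing in $P$, which combines with the inductive monotonicity of $V_k$.

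Passing to the limit $V_k\uparrow V$ gives $V(0,P';\lambda)\ge V(0,P;\lambda)$ for every $\lambda$. Hence whenever immediate retirement is suboptimal at $P$, it is also suboptimal at $P'$, and so $B(P')\ge B(P)$. The delicate points are justifying convergence of value iteration, since rewards are unbounded Gaussian and need a weighted sup-norm with $\gamma<1$, and checking that the boundedness of $\sigma$ preserves convexity. The restless drift enters only through the $+v$ inside $f$ and $\sigma$, so it does not affect the argument.
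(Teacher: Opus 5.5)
Your final argument is correct and is essentially the paper's proof: convexity of $V$ in $m$, Jensen's inequality applied to the spread $\sigma^2(P)=(P+v)^2/(P+v+s)$ of the next-step mean inside value iteration, and the break-even salary inheriting $V(0,P';\lambda)\ge V(0,P;\lambda)$. Your joint induction (convexity plus monotonicity in $P$, using that $f(P)=(P+v)s/(P+v+s)$ is increasing) is more complete than the paper's sketch, which passes over the dependence of $P'$ on $P$ with ``at fixed continuation value.''

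The garbling detour can be dropped, and the limit step is easier than you expect. Start from $V_0\equiv\lambda/(1-\gamma)$, so that $V_k$ is the value with at most $k$ pulls; then $V_k\uparrow V$ by dominated convergence, because $\mathbb{E}\sum_n\gamma^{n-1}|r_n|<\infty$, and no weighted sup-norm contraction is needed.
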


The mechanism is convexity. Higher posterior variance $P$ produces a wider distribution over the next-period posterior mean, because more of the remaining uncertainty will be resolved by the next observation. Since the agent can exploit favorable posterior means and retire to avoid unfavorable ones, the value function is convex in the posterior mean, and Jensen's inequality converts wider spread into higher expected value. More uncertainty about the arm therefore yields a larger option value of pulling.

Volatility enters this logic through the belief dynamics. Each unit of $v$ adds directly to the next-period posterior variance ($P'$ grows with $v$) and, by Lemma~\ref{lem:mono_P}, raises the bonus. The effect compounds over time: in a volatile environment, each future pull operates on an arm whose uncertainty has been replenished by drift, so the option value of pulling is sustained rather than decaying toward zero.

%% file: 5_index.tex
\section{CAUSE: a closed-form index via control-as-inference}
\label{sec:cause}

The Gittins analysis of Section~\ref{sec:optimal} characterizes optimal exploration but does not yield a computable policy for the restless setting. We now derive CAUSE, a tractable closed-form index for Gaussian state-space bandits, by casting action selection as posterior inference under an optimality constraint within the control-as-inference framework \citep{levine2018reinforcement, todorov2008general}. The derivation is methodologically distinct from standard routes to exploration bonuses (optimism, posterior sampling, dynamic programming on belief-state MDPs) and yields, to our knowledge, the first closed-form index for a natural restless bandit setting. The complete derivation is provided in Appendix~\ref{app:derivation}; here we summarize the setup, state the closed-form result, and discuss its structural properties.

\subsection{Control as inference for the state-space bandit}

Following the control-as-inference framework, we place a binary optimality variable $o_t \in \{0, 1\}$ at each time step with
\begin{equation}\label{eq:opt_likelihood}
  p(o_t = 1 \mid r_t) = \sigma(\gamma^{t-1}\, r_t),
\end{equation}
where $\sigma$ is the logistic sigmoid and the discount $\gamma^{t-1}$ encodes the diminishing importance of future rewards. The agent's action preference is governed by the posterior $p(x_t \mid o_{t:\infty} = 1, r_{1:t-1})$, which weighs the predictive belief about $x_t$ against the backward message $b_t(x_t) \equiv p(o_{t:\infty} = 1 \mid x_t)$ carrying information from future optimality constraints. Exploration arises not from an externally added bonus but from this posterior: states that make future optimality more likely are upweighted, and the corresponding arms preferred.

A note on what the inference reformulation buys: in the optimality formulation, restless bandits are notoriously difficult because optimal actions depend on the joint state of all arms \citep{whittle1988restless}. Under the inference formulation, this difficulty does not arise: each arm's posterior under future optimality depends only on its own latent process and backward message, and action selection reduces to comparing per-arm scalars. The restless structure, hard in the optimality formulation, becomes per-arm in the inference formulation, which is what makes a closed-form index possible.

\subsection{The closed-form index}

The backward message admits a closed-form recursion in the Gaussian state-space setting. We restrict the backward message to a sigmoid ansatz $b_t(x_t) \approx \sigma(\alpha_t x_t)$, natural because the local optimality likelihood is itself sigmoid (Eq.~\ref{eq:opt_likelihood}), and track the coefficient $\alpha_t$ through the recursion using probit approximations for the Gaussian-sigmoid convolutions involved (Appendix~\ref{app:derivation}). This yields an infinite-horizon precision $S$ that depends transparently on $\gamma$, $s$, and $v$:
\begin{equation}\label{eq:S_main}
  S \approx \frac{2}{(\ln D+1-\gamma)\bigl(1 + \sqrt{1 + \phi\, s}\bigr)},
  \qquad 
  D = \tfrac{1}{2}\bigl(1 + \sqrt{1 + \phi\, v\, (\alpha^*)^2}\bigr),
\end{equation}
with $\phi = \pi/8$ and $\alpha^* = 2/[(1-\gamma)(1 + \sqrt{1 + \phi s})]$ the undamped ($v=0$) limit. The structure is interpretable: stochasticity $s$ damps $S$ through the $\sqrt{1 + \phi s}$ factor, while volatility $v$ enters through $D$, which shortens the effective planning horizon by attenuating the backward flow of precision. Combining $S$ with the predictive belief and approximating the resulting skew-normal posterior mean (Appendix~\ref{app:derivation}) gives the CAUSE index:
\begin{equation}\label{eq:index_closed}
  \boxed{\;\lambda_{\mathrm{CAUSE}} = m + c\, (P+v)\, \tilde{\alpha},\;}
\end{equation}
where
\begin{equation}\label{eq:alpha_tilde}
  \tilde{\alpha} = \frac{S}{\sqrt{1 + \phi\, (P+v)\, S^2}},
\end{equation}
$m$ and $P$ are the current Kalman posterior moments, and $c \in (0, 1)$ is a tunable scale parameter. The agent selects the arm with the largest $\lambda_{\mathrm{CAUSE}}$. The parameter $c$ controls the overall magnitude of the exploration bonus and does not affect its dependence on $s$ or $v$; we fix $c = 1/2$ throughout this paper, a value that emerges naturally from one route through the derivation (Appendix~\ref{app:derivation}).

\subsection{Structure of the optimal bonus}

Stochasticity $s$ damps $S$ through the $(1 + \sqrt{1 + \phi s})^{-1}$ factor, so $S$, $\tilde\alpha$, and the bonus all decrease with $s$. In the high-volatility (or high-$P$) limit $(P+v) \to \infty$, $\tilde{\alpha} \to 1/\sqrt{\phi\, (P+v)}$ and the bonus scales as $c\sqrt{8/\pi}\sqrt{(P+v)}$, recovering canonical UCB-style $\sqrt{P+v}$ scaling. More generally, volatility raises the bonus through the predictive variance $P+v$ while damping $S$ through $D$, acting as a dual discount that shortens the effective planning horizon. The CAUSE index thus inherits, in closed form, the structural properties of the optimal Gittins bonus: exploration rises with volatility that makes new observations relevant and falls with stochasticity that makes them unreliable.

%% file: 6_results.tex
\section{Experiments}
\label{sec:experiments}

We evaluate CAUSE in four experiments: cumulative regret in heterogeneous-arms restless bandits, comparison of CAUSE's exploration bonus to the optimal Gittins bonus across the noise-parameter grid, verification against the rested-optimal Gittins reference at $v = 0$, and a lesion analysis that uses CAUSE to characterize the behavioral consequences of miscalibrated noise inference.

\subsection{Regret in heterogeneous-arms restless bandits}
\label{sec:regret}
Standard exploration strategies treat uncertainty as a single quantity and prescribe more exploration whenever uncertainty is high. The structural analysis of Section~\ref{sec:optimal} predicts this is suboptimal whenever volatility and stochasticity vary across arms: uncertainty arising from volatility warrants exploration; uncertainty arising from stochasticity does not. We test this in three regimes. In the \emph{mixed} regime, arms span the four cells of the $(v, s)$ grid, with both volatility and stochasticity varying across arms. In the \emph{s-dominant} regime, stochasticity varies substantially across arms while volatility is uniform. In the \emph{v-dominant} regime, volatility varies while stochasticity is uniform. We compare CAUSE against Thompson sampling \citep{thompson1933likelihood}, UCB \citep{kaufmann2012bayesian}, predictive sampling \citep{liu2023nonstationary}, a myopic baseline, and a Gittins-per-arm policy whose bonus is computed by numerical value iteration on the rested retirement problem (Appendix~\ref{app:supp_results}).

\begin{figure}[t]
  \centering
  \includegraphics[width=\linewidth]{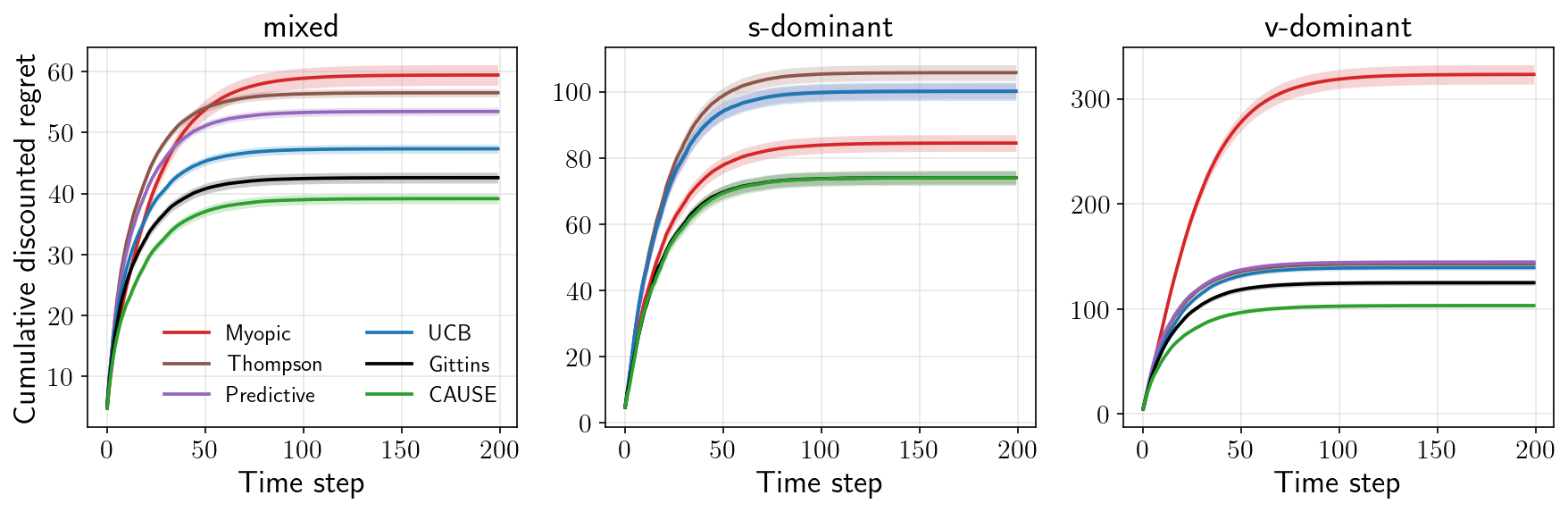}
  \caption{Cumulative discounted regret over $T=200$ steps in three regimes ($K=4$, $\gamma=0.95$, $1000$ Monte Carlo runs). CAUSE achieves the lowest regret in all three regimes.}
  \label{fig:regret}
\end{figure}

In the \emph{mixed} regime (Figure~\ref{fig:regret}, left), CAUSE achieves the lowest regret, ahead of all baselines including Gittins-per-arm. This is not surprising: the Gittins index is optimal for rested bandits, but its rested-optimality does not transfer to the restless setting we study, in which the latent state evolves under volatility. In the \emph{s-dominant} regime (Figure~\ref{fig:regret}, middle), where stochasticity varies substantially across arms but volatility is shared, we expect both CAUSE and Gittins-per-arm to perform well. High-$s$ arms accumulate high posterior variance, but exploring such arms is unproductive: the variance arises from stochasticity rather than from drift in the latent state, so additional observations reduce posterior uncertainty only slowly. Both CAUSE and Gittins-per-arm correctly down-weight exploration of high-$s$ arms, and the figure confirms this: the two policies tie at the lowest regret. UCB and Thompson sampling, which treat posterior variance as a single quantity, incur substantially higher regret, worse than the myopic baseline that ignores uncertainty entirely. In the \emph{v-dominant} regime (Figure~\ref{fig:regret}, right), CAUSE outperforms Gittins-per-arm by a substantial margin. Both policies allocate more exploration to high-volatility arms, as the structural analysis prescribes; the difference is that CAUSE's bonus additionally incorporates a horizon-shortening damping that accounts for the diminishing value of distant-future information when latent states drift (Section~\ref{sec:cause}). Gittins-per-arm, derived for rested bandits, lacks this damping and over-explores in this regime.

Robustness across problem sizes ($K$), discount factors ($\gamma$), reward configurations, and the UCB scaling parameter $c$ is reported in Appendix~\ref{app:robustness}.

\subsection{Closed-form bonus tracks the optimal Gittins bonus}
\label{sec:bonus_gittins}

\begin{wrapfigure}[14]{r}{0.4\linewidth}
\vspace{-2em}
  \centering
  \includegraphics[width=\linewidth]{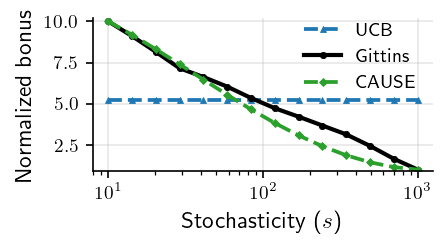}
  \caption{Exploration bonus as a function of stochasticity $s$, normalized to a common range. CAUSE tracks the Gittins shape; UCB is insensitive to $s$.}
  \label{fig:bonus}
\end{wrapfigure}

CAUSE is derived independently of the Gittins framework, via control-as-inference under an optimality constraint. Whether its closed form captures the structural shape of the optimal Gittins bonus is an empirical question. We compute the Gittins bonus by value iteration on the retirement problem (Appendix~\ref{app:setup_gittins}) and compare it to the bonuses of CAUSE and UCB at a fixed posterior variance (Thompson sampling and predictive sampling do not produce explicit per-arm bonuses).

Figure~\ref{fig:bonus} shows each policy's bonus along the stochasticity axis. CAUSE tracks the Gittins shape closely across two orders of magnitude, while UCB is insensitive to $s$ by construction, allocating exploration based on posterior variance alone. The convergence of two structurally distinct derivations (the Gittins retirement problem and the control-as-inference posterior) on the same $s$-dependence supports the structural account of Section~\ref{sec:optimal}. Bonus dependence on volatility (Appendix~\ref{app:bonus_v}) is less discriminative between policies; the differences along that axis emerge in regret rather than in bonus shape.

\subsection{Matching the rested optimum}
\label{sec:rested_verification}
CAUSE's derivation involves several analytical approximations (Appendix~\ref{app:derivation}). Quantifying the suboptimality these approximations introduce is intractable in general, since formal regret bounds for index policies on restless Gaussian bandits remain an open problem. At the rested limit ($v = 0$), however, the optimal policy is the numerically computable Gittins index, provably optimal for infinite-horizon discounted reward in iid bandits \citep{gittins1979bandit}. This regime admits a direct empirical comparison of CAUSE against optimal performance. At $v = 0$ across both moderate ($s \in \{9, 25\}$) and extreme ($s \in \{9, 900\}$) heterogeneity in stochasticity, CAUSE's discounted cumulative regret is statistically indistinguishable from the numerical Gittins reference. At $s \in \{9, 25\}$ over $T = 200$ steps and $1000$ Monte Carlo runs, CAUSE achieves $22.61 \pm 0.88$ and Gittins-per-arm achieves $22.50 \pm 0.75$ (mean $\pm$ SEM), with the difference of $0.11$ well within the sampling error. The closeness is preserved at extreme stochasticity ($s \in \{9, 900\}$, Appendix~\ref{app:v0_verification}).

Combined with the restless results of Section~\ref{sec:regret}, this provides a layered empirical bound on CAUSE's suboptimality: where the optimal policy is known, CAUSE matches it within Monte Carlo precision; where the optimal policy is intractable, CAUSE exceeds the strongest tractable reference.

\subsection{Pathological noise inference produces opposing reversals}
\label{sec:lesion}

We compare three agents that differ only in their inference of the environmental noise structure. The \emph{healthy} agent jointly infers $(\hat s, \hat v)$ from the observation stream by explaining away \citep{piray2021model}: the two sources compete to account for observed variance, with their relative contributions disentangled by their distinct temporal signatures. The \emph{stochasticity-blind} agent is insensitive to $s$: its $\hat s$ estimate is pinned at its initial value, so its inference attributes the observed variance almost entirely to volatility, producing a small $\hat s$ and an inflated $\hat v$. The \emph{volatility-blind} agent is the symmetric case: insensitive to $v$, with $\hat v$ pinned at its initial value, so the increased variance is absorbed into $\hat s$, producing a small $\hat v$ and an inflated $\hat s$. Each agent uses its own inferred $(\hat s, \hat v)$ to compute the CAUSE bonus. The posterior variance is held at the prior value $P_0$ across all agents, so that the bonus differences across agents reflect their inferred $(\hat s, \hat v)$ alone, without compounding through differences in belief updating.

Figure~\ref{fig:lesion} shows the resulting bonus surfaces alongside the learning-rate measure \citep{piray2021model}. Both signatures reverse similarly under each lesion. The stochasticity-blind agent's bonus is increasing in $s$, a direct reversal of the healthy monotonicity: as the true $s$ grows, the agent's inference absorbs the variance into $\hat v$ rather than $\hat s$, and the inflated $\hat v$ raises the bonus through the same channel that would, in a healthy agent, register genuine drift. The agent therefore explores more in environments where exploration is least informative, treating noise as if it were drift. The volatility-blind agent shows the symmetric reversal: bonus decreases in $v$ because the increased variance is absorbed into $\hat s$, suppressing $\tilde\alpha_t$ through the stochasticity channel. The agent under-explores in environments where exploration is most informative, treating drift as if it were noise. Each lesion reverses only its corresponding monotonicity, leaving the orthogonal axis intact. The cross-channel parallel between learning rate and exploration bonus is a direct consequence of both quantities being downstream of the same joint inference of $(\hat s, \hat v)$: when the inference fails through misattribution, both the belief-update signal (learning rate) and the action-selection signal (exploration bonus) inherit the same misweighting.

\begin{figure}[t]
  \centering
  \includegraphics[width=0.9\linewidth]{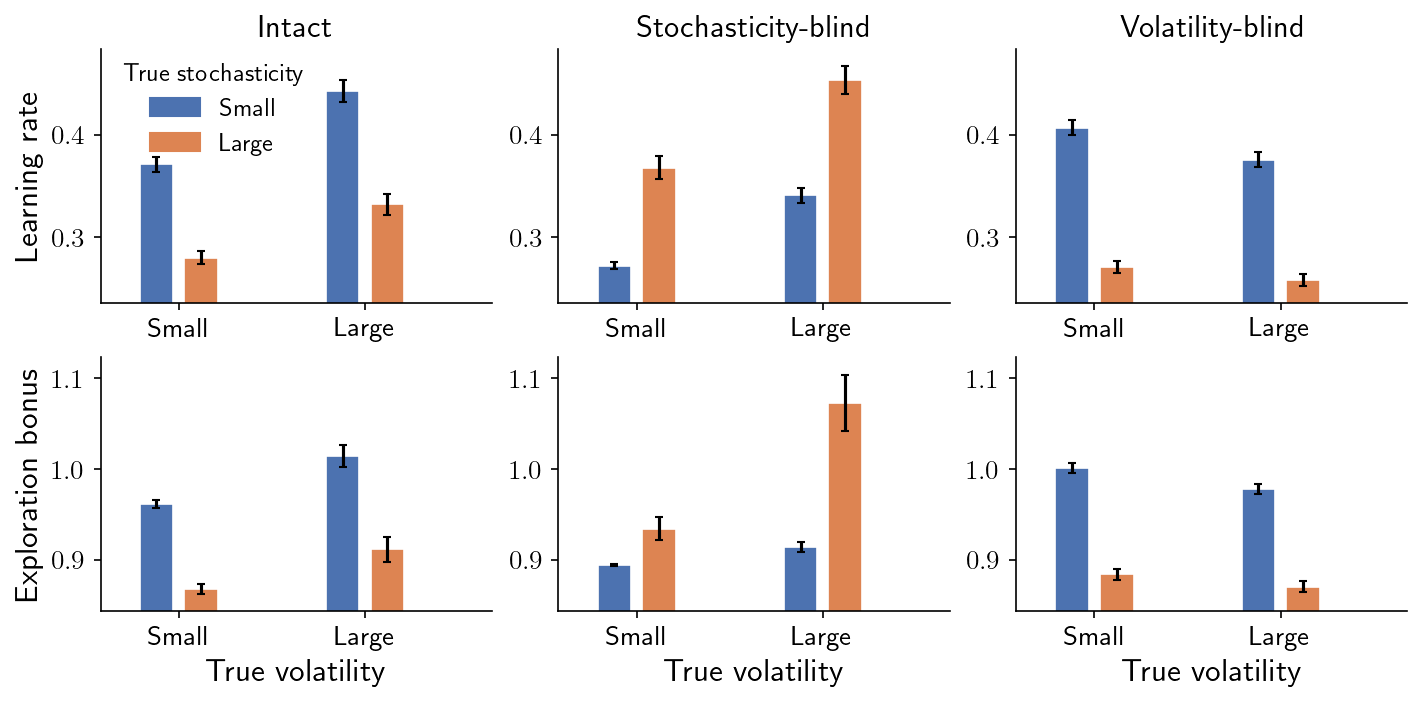}
  \caption{Learning rate (top row) and CAUSE exploration bonus (bottom row) for healthy, stochasticity-blind, and volatility-blind agents, averaged across multiple seeds. Both signatures reverse identically under each lesion: stochasticity-blindness reverses the $s$-monotonicity; volatility-blindness reverses the $v$-monotonicity.}
  \label{fig:lesion}
\end{figure}

%% file: 7_discussion.tex
\section{Discussion}
\label{sec:discussion}
We have shown that volatility and stochasticity drive optimal exploration in opposite directions, that this asymmetry is captured in closed form by the CAUSE index, and that miscalibrated noise inference produces dissociable, axis-specific reversals in exploratory behavior. The structural result extends the Gittins framework from iid Gaussian bandits to Gaussian state-space bandits with latent dynamics, providing, to our knowledge, the first closed-form exploration index for restless bandits with continuous-state Gaussian dynamics. Empirically, CAUSE achieves the lowest regret across the mixed, s-dominant, and v-dominant restless regimes; in the rested limit, where Gittins is provably optimal, CAUSE's regret matches it within Monte Carlo precision. Together these results provide a layered empirical bound on CAUSE's suboptimality: where the optimal policy is known, CAUSE matches it; where it is intractable, CAUSE exceeds the strongest tractable reference. The lesion analysis additionally shows that miscalibrated noise inference produces a specific behavioral signature whose direction identifies the failed inferential channel.

The framework is developed for Gaussian state-space models with random-walk latent dynamics, and the closed-form CAUSE index inherits this scope; extension to non-Gaussian distributions or change-point dynamics is an open question. The directional predictions should extend on general informational grounds (uncertainty arising from drift is reducible by exploration, uncertainty arising from stochasticity is not, regardless of the parametric form), but the closed form itself relies on Gaussian conjugacy and the probit approximation to Gaussian-sigmoid convolutions, and does not transfer mechanically. The control-as-inference framework we use to derive CAUSE, however, is general: it casts exploration as a backward message-passing problem that applies wherever a Bellman recursion admits this form. To our knowledge, this is the first application of control-as-inference to derive an exploration index for restless bandits, and the framework provides a route to closed-form policies in restless bandit classes beyond linear-Gaussian dynamics. Within the present setting, our derivation tracks the backward message in the sigmoid family using probit approximations and trapezoidal linearization of the recursion (Appendix~\ref{app:derivation}); an exponential-family alternative admits exact Gaussian recursions but, in exploratory tests, produced over-exploration at high uncertainty since it does not damp the second-moment contribution to the message. The sigmoid choice trades exact tractability for calibrated behavior. The behavioral predictions are evaluated in simulation; the predicted reversal patterns offer concrete empirical targets for behavioral testing, complementing the learning-rate signatures already established in this literature.

The framework is most directly relevant to computational accounts of psychiatric conditions in which inference about environmental noise has been implicated, including anxiety, psychosis, and depression. The empirical exploration literature in these conditions is mixed: studies report increased exploration, reduced exploration, or context-dependent effects, with the inconsistencies typically attributed to task heterogeneity or population differences \citep{aylward2019altered, fan2023trait}. Our framework offers a different reading. If a condition involves miscalibrated noise inference, the direction of the exploratory abnormality should depend on which inferential channel is affected and on the relative weight of volatility and stochasticity in the task: hyposensitive stochasticity inference predicts over-exploration in high-stochasticity tasks, while hyposensitive volatility inference predicts under-exploration in high-volatility tasks. The apparent inconsistency in this literature should therefore resolve when tasks are stratified by their volatility-stochasticity composition and individuals by their inferential profile.

%% file: Appendix_A.tex
\newpage
\section{Proofs of monotonicity results}
\label{app:proofs}

This appendix provides proofs of the three monotonicity results stated in Section~\ref{sec:optimal} of the main text: the index decomposition (Proposition~\ref{prop:decomposition}), the monotonicity of the exploration bonus in the observation noise $s$ (Theorem~\ref{thm:mono_s}), and the monotonicity in the innovation variance $v$ (Theorem~\ref{thm:mono_v}). The monotonicity in the posterior variance $P$ (Lemma~\ref{lem:mono_P}) follows from \citet{yao2006} and is proved here for use in the $v$-proof. Notation, the generative model, and the Kalman recursion follow Section~\ref{sec:problem} of the main text.

\subsection{Preliminaries}
\label{app:prelim}

A \emph{stopping rule} for an observation sequence $r_1, r_2, \ldots$ is a rule that, at each time $n$, decides whether to stop using only the observations $r_1, \ldots, r_n$ seen so far. We write $\xi$ for the time at which the rule stops.

\begin{definition}[Value function of the retirement problem]
\label{def:value}
Given retirement salary $\lambda$, the value function of the retirement problem is
\[
  V(m, P; s, v, \gamma)
  = \sup_{\xi \ge 0}\; E\!\left[\left.
    \sum_{n=1}^{\xi} \gamma^{n-1}\, r_n
    + \gamma^{\xi}\, \frac{\lambda}{1-\gamma}
  \;\right|\; x_0 \sim N(m, P)\right],
\]
where the supremum is over stopping rules $\xi$ for the observation sequence.
\end{definition}

The value function satisfies the Bellman equation
\begin{equation}\label{eq:bellman}
  V(m, P) = \max\!\left(\frac{\lambda}{1-\gamma},\;\; m + \gamma\, E_{m'}[V(m', P')]\right),
\end{equation}
where $P' = (P+v)s/(P+v+s)$ is the updated posterior variance (deterministic) and $m' \sim N(m, \sigma^2(P))$ with $\sigma^2(P) = (P+v)^2/(P+v+s) = (P+v) - P'$ is the next-step posterior mean.

\subsection{Index decomposition}
\label{app:index_decomp}

Rearranging the retirement equation~\eqref{eq:retire} using $\sum_{n=1}^{\xi} \gamma^{n-1} = (1-\gamma^\xi)/(1-\gamma)$ yields the equivalent ratio form
\begin{equation}\label{eq:ratio}
  \lambda
  \;=\;
  \sup_{\xi \ge 1}\;
  \frac{E\!\left[\sum_{n=1}^{\xi} \gamma^{n-1}\, r_n \right]}
       {E\!\left[\sum_{n=1}^{\xi} \gamma^{n-1} \right]},
\end{equation}
which we use in the proof of Theorem~\ref{thm:mono_s} below.

\begin{proof}[Proof of Proposition~\ref{prop:decomposition}]
Replacing the prior mean $m$ by $m + a$ and writing $\tilde{x}_t = x_t - a$ gives $\tilde{x}_0 \sim N(m, P)$, with the same innovations and observation noise, and $r_t = \tilde{x}_t + a + \epsilon_t$. The filtration is unchanged. Substituting $r_n = \tilde{r}_n + a$ into~\eqref{eq:retire}:
\[
  E\!\left[\sum_{n=1}^{\xi} \gamma^{n-1} r_n + \gamma^\xi \frac{\lambda}{1-\gamma}\right]
  = E\!\left[\sum_{n=1}^{\xi} \gamma^{n-1} \tilde{r}_n + \gamma^\xi \frac{\lambda - a}{1-\gamma}\right]
     + \frac{a}{1-\gamma}.
\]
The retirement equation at salary $\lambda$ for the shifted arm is thus the retirement equation at salary $\lambda - a$ for the unshifted arm, after subtracting $a/(1-\gamma)$ from both sides. Hence $\lambda(m+a, P, s, v, \gamma) = a + \lambda(m, P, s, v, \gamma)$, and setting $m = 0$ gives~\eqref{eq:decompose_main}.
\end{proof}

\subsection{Proof of Theorem~\ref{thm:mono_s}}
\label{app:mono_s}

\begin{proof}
The argument adapts Yao's randomization proof for iid Gaussian bandits \citep[Lemma~1]{yao2006} to the state-space setting. Fix $s_1 < s_2$, and let $\{x_t\}$ be a single realization of the latent state. Let $\epsilon_n \sim \mathcal{N}(0, s_1)$ be the observation noise of the clean arm, $r_n = x_n + \epsilon_n$, and $\eta_n \sim \mathcal{N}(0, s_2 - s_1)$ independent of everything else. Setting $r_n' = r_n + \eta_n$ gives a valid observation sequence for an $s_2$-arm driven by the same latent process.

Any stopping rule $\xi'$ on the noisier sequence $\{r_n'\}$ can be viewed as a randomized stopping rule on the clean sequence $\{r_n\}$: the agent operating on $\{r_n\}$ can simply draw the extra noise $\{\eta_n\}$ internally and stop according to $\xi'$. The supremum in the ratio form (Eq.~\eqref{eq:ratio}) over stopping rules for $\{r_n'\}$ is therefore bounded above by the supremum over (possibly randomized) stopping rules for $\{r_n\}$, giving $B(P, s_2, v, \gamma) \le B(P, s_1, v, \gamma)$.

The randomization argument extends from the iid setting to the state-space setting without modification because the observation channel is still additive Gaussian noise and the latent process $\{x_t\}$ is shared between the clean and noisy versions of the bandit; the latent dynamics never enter the coupling.
\end{proof}

\begin{remark}
The argument uses no property of the latent dynamics $\{x_t\}$; it applies to both rested and restless bandits. The key requirements are additive observation noise and independence of $\eta_n$ across time.
\end{remark}

\subsection{Proof of Lemma~\ref{lem:mono_P}}
\label{app:mono_P}

\begin{proof}
The result follows from a standard convexity argument. The retirement-problem value function $V(m, P)$ is convex in $m$ for each $P$ (a standard backward-induction property of Bellman operators with linear and constant alternatives, used also in the proof of Theorem~\ref{thm:mono_v}). The $P$-dependent Bellman operator at fixed $v$ satisfies: increasing $P$ increases the predictive variance $P + v$, hence increases the spread of the next-step posterior mean $m'$, hence by Jensen's inequality increases $\mathbb{E}[V(m', P')]$ at fixed continuation value. The break-even $\lambda$ inherits the monotonicity. In the iid limit ($v = 0$), this reduces to a result of \citet{yao2006} (Theorem~1).
\end{proof}

\subsection{Proof of Theorem~\ref{thm:mono_v}}
\label{app:mono_v}

\paragraph{Intuition.}
When the latent state drifts faster, two things happen. First, each pull is more informative relative to the growing uncertainty, so exploring is more rewarding. Second, the arm keeps generating fresh uncertainty even after you observe it, effectively keeping the posterior variance elevated, and higher variance increases the bonus by Lemma~\ref{lem:mono_P}.

\begin{proof}
We track the $v$-dependence of the value function explicitly, writing $V(m, P; v)$. Recall
\[
  P'(v) = \frac{(P+v)s}{P+v+s}, \qquad \sigma^2(v) = \frac{(P+v)^2}{P+v+s},
\]
both strictly increasing in $v$.

Convexity of $V$ in its first argument follows from a standard backward-induction argument: the continuation value $m + \gamma\, E[V(m', P'; v)]$ with $m' \sim N(m, \sigma^2)$ is a Gaussian convolution of a convex function plus a linear term, hence convex, and the Bellman operator preserves convexity through the pointwise maximum with the retirement value.

The proof structure is: $v$ enters the continuation value at three places, through the spread $\sigma^2(v)$ of the next-step posterior mean, through the next-step posterior variance $P'(v)$, and through the next-period value function $V(\cdot, \cdot; v)$ itself. Each contributes a monotonicity-preserving inequality, which we chain.

\smallskip\noindent\emph{Base case.} $V = \lambda/(1-\gamma)$, independent of $v$.

\smallskip\noindent\emph{Inductive step.} Joint hypothesis: at all future stages, $V(\cdot, P; v)$ is convex in its first argument for every $(P, v)$, and $V(m, P; v)$ is nondecreasing in $v$ for every $P$. Take $v_1 < v_2$ and consider the continuation value $C(0, P; v) = \gamma\, E[V(m'_v, P'(v); v)]$ where $m'_v \sim N(0, \sigma^2(v))$. We chain three inequalities:

\begin{enumerate}
\item \emph{Wider spread of $m'$ helps (Jensen).} $\sigma^2(v_2) > \sigma^2(v_1)$ and $V(\cdot, P'(v_2); v_2)$ is convex, so spreading the zero-mean distribution of $m'$ increases its expected value:
\[
  E[V(m'_{v_2}, P'(v_2); v_2)] \ge E[V(m'_{v_1}, P'(v_2); v_2)].
\]

\item \emph{Larger $P'$ helps (Lemma~\ref{lem:mono_P}).} $P'(v_2) > P'(v_1)$ and $V$ is nondecreasing in its second argument:
\[
  E[V(m'_{v_1}, P'(v_2); v_2)] \ge E[V(m'_{v_1}, P'(v_1); v_2)].
\]

\item \emph{Larger $v$ helps (inductive hypothesis).} $v_2 > v_1$ and $V$ is nondecreasing in $v$:
\[
  E[V(m'_{v_1}, P'(v_1); v_2)] \ge E[V(m'_{v_1}, P'(v_1); v_1)].
\]
\end{enumerate}

Chaining: $C(0, P; v_2) \ge C(0, P; v_1)$, and therefore
\[
  V(0, P; v_2) = \max\!\left(\frac{\lambda}{1-\gamma},\; C(0, P; v_2)\right) \ge V(0, P; v_1).
\]
By Proposition~\ref{prop:decomposition}, the bonus inherits this monotonicity at all $m$, hence $B(P, s, v_2, \gamma) \ge B(P, s, v_1, \gamma)$.
\end{proof}

%% file: Appendix_B.tex
\newpage
\section{Derivation of the CAUSE index}
\label{app:derivation}

We derive the closed-form exploration index used in the main text. The derivation proceeds in four steps: (i) marginalize the reward out of the per-step optimality likelihood, (ii) solve the backward recursion under a sigmoid ansatz to obtain a closed-form infinite-horizon message, (iii) form the posterior over the latent state under future optimality, and (iv) extract the index as the posterior mean, replacing a single gating term by a tunable scale.

\subsection{Generative model}

Each arm is governed by a linear-Gaussian state-space model with latent dynamics and observation model
\begin{equation}
  p(x_{t+1} \mid x_t) = \mathcal{N}(x_t, v),
  \qquad
  p(r_t \mid x_t) = \mathcal{N}(x_t, s).
\end{equation}
Following the control-as-inference framework, we place a binary optimality variable at each step,
\begin{equation}
  p(o_t = 1 \mid r_t) = \sigma(\gamma^{t-1} r_t),
\end{equation}
where $\sigma$ is the logistic sigmoid and $\gamma \in (0,1)$ is the discount factor. The agent acts under the event $o_{t:\infty} = 1$.

\subsection{Per-step optimality after marginalizing the reward}

Marginalizing the unobserved reward against the observation model and using the standard probit approximation for the Gaussian-sigmoid convolution \citep{mackay2003information} yields
\begin{equation}\label{eq:local}
  p(o_t = 1 \mid x_t) \;=\; \int \sigma(\gamma^{t-1} r_t)\, \mathcal{N}(r_t \mid x_t, s)\, dr_t \;\approx\; \sigma(\beta_t\, x_t),
\end{equation}
with
\begin{equation}
  \beta_t = \frac{\gamma^{t-1}}{\sqrt{1 + \phi\, \gamma^{2(t-1)}\, s}},
  \qquad \phi = \frac{\pi}{8}.
\end{equation}
Observation noise $s$ dampens the local optimality signal: larger $s$ weakens the coupling between latent state and optimality at that step.

\subsection{Backward message recursion}

The backward message is the probability of future optimality conditioned on the current latent state,
\begin{equation}
  b_t(x_t) \equiv p(o_{t:\infty} = 1 \mid x_t),
\end{equation}
which satisfies
\begin{equation}\label{eq:recursion}
  b_t(x_t) = p(o_t = 1 \mid x_t) \int p(x_{t+1} \mid x_t)\, b_{t+1}(x_{t+1})\, dx_{t+1}.
\end{equation}
Exact evaluation of this recursion is intractable. We adopt the ansatz
\begin{equation}\label{eq:ansatz}
  b_t(x_t) \;\approx\; \sigma(\alpha_t\, x_t),
\end{equation}
where $\alpha_t$ is a scalar coefficient to be determined, and track its value through time. This is in the spirit of fixed-form variational inference: we restrict the backward message to a parametric family and track the parameter that best represents the true message within that family.

\paragraph{Base case.}
At the terminal horizon $T$, the future message is trivial and Eq.~\eqref{eq:recursion} reduces to the local likelihood, giving $b_T(x_T) = \sigma(\beta_T x_T)$ and thus $\alpha_T = \beta_T$.

\paragraph{Inductive step.}
Assume $b_{t+1}(x_{t+1}) \approx \sigma(\alpha_{t+1} x_{t+1})$. Applying the probit approximation to the Gaussian-sigmoid convolution induced by the latent transition yields
\begin{equation}
  \int \mathcal{N}(x_{t+1} \mid x_t, v)\, \sigma(\alpha_{t+1} x_{t+1})\, dx_{t+1} \;\approx\; \sigma\!\left( \frac{\alpha_{t+1}\, x_t}{\sqrt{1 + \phi v \alpha_{t+1}^2}} \right).
\end{equation}
The right-hand side of Eq.~\eqref{eq:recursion} is then the product of two sigmoids: the local likelihood $\sigma(\beta_t x_t)$ and the propagated future message. We project this product back onto the ansatz~\eqref{eq:ansatz} by matching the linear coefficient of $x_t$ at the origin, giving the recursion
\begin{equation}\label{eq:alpha-recursion}
  \alpha_t \;=\; \beta_t \;+\; \frac{\alpha_{t+1}}{\sqrt{1 + \phi v \alpha_{t+1}^2}}.
\end{equation}
Iterating Eq.~\eqref{eq:alpha-recursion} backward from the terminal value $\alpha_T = \beta_T$ determines $\alpha_t$ at all earlier steps. Two structural consequences follow. First, observation noise $s$ enters $\beta_t$ and dampens the per-step contribution to the backward precision. Second, process noise $v$ dampens the propagated precision $\alpha_{t+1}$ by the factor $\sqrt{1 + \phi v \alpha_{t+1}^2}$, so volatility acts as a dual discount factor: it reduces the effective planning horizon by attenuating the backward flow of precision through highly volatile transitions, independently of the geometric discount $\gamma$.

\subsection{Closed-form infinite-horizon precision}

To obtain a closed form, we linearize the recursion by replacing the dynamic denominator by its trapezoidal-rule average over the horizon:
\begin{equation}
  D = \tfrac{1}{2}\left(1 + \sqrt{1 + \phi v (\alpha^*)^2}\right),
\end{equation}
where $\alpha^*$ is the maximum accumulated precision. The recursion linearizes to $\alpha_t \approx \beta_t + \alpha_{t+1}/D$, which is geometric and unrolls to $\alpha_1 = \sum_{k=1}^{T} \beta_k / D^{k-1}$. Therefore, we have
\begin{equation}
  S \;\equiv\; \lim_{T \to \infty} \alpha_1 \;\approx\; \sum_{k=1}^{\infty} \frac{1}{\sqrt{1 + \phi s\, \gamma^{2(k-1)}}} \left(\frac{\gamma}{D}\right)^{k-1}.
\end{equation}
The summand is slowly varying for $\gamma$ near $1$, so the sum is well-approximated by the corresponding integral, which after the substitution $u = (\gamma/D)^{x-1}$ reduces to
\begin{equation}\label{eq:S_integral}
  S \approx \frac{1}{\ln(D/\gamma)} \int_0^1 \frac{du}{\sqrt{1 + \phi s\, u^2}}.
\end{equation}
This integral has a closed form in terms of the inverse hyperbolic sine, but a simpler algebraic approximation suffices for our purposes. We approximate the integrand's denominator by its trapezoidal-rule average over $[0, 1]$, $\frac{1}{2}(1 + \sqrt{1+\phi s})$, accurate since the denominator is nearly linear on this interval. We further use $-\ln\gamma \approx 1 - \gamma$, valid for $\gamma$ near 1, so that $\ln(D/\gamma) \approx \ln D + (1 - \gamma)$. Substituting into Eq.~\eqref{eq:S_integral} gives:
\begin{equation}
S \approx \frac{2}{(\ln D + 1 - \gamma)(1 + \sqrt{1+\phi s})}. \label{eq:S-closed}
\end{equation}
Evaluating $D$ in turn requires $\alpha^*$. The recursion's maximum precision is achieved at $t = 1$, so self-consistency would require $\alpha^* = S$, leading to a fixed-point equation. To avoid solving it, we approximate $\alpha^*$ by its undamped ($v = 0$) limit, where $D = 1$ and Eq.~\eqref{eq:S-closed} reduces to
\begin{equation}\label{eq:alpha_star}
  \alpha^* \approx \frac{2}{(1 - \gamma)(1 + \sqrt{1+\phi s})}.
\end{equation}
Substituting Eq.~\eqref{eq:alpha_star} into the definition of $D$ closes the form.

\subsection{The CAUSE index}

At decision time, the predictive belief over the latent state is $p(x_t \mid r_{1:t-1}) = \mathcal{N}(m_{t-1}, \Sigma_t)$ with predictive variance $\Sigma_t = P_{t-1} + v$. Conditioning on infinite-horizon optimality multiplies this prior by the backward message:
\begin{equation}
  p(x_t \mid o_{t:\infty} = 1, r_{1:t-1}) \;\propto\; \mathcal{N}(x_t \mid m_{t-1}, \Sigma_t)\, \sigma(S x_t).
\end{equation}
The product of a Gaussian and a sigmoid is a skew-normal distribution. Its mean is obtained from the exponential-family identity $\mathbb{E}[x_t] = m_{t-1} + \Sigma_t\, \partial_{m_{t-1}} \log Z_t$, where $Z_t$ is the normalizer. The probit approximation gives
\begin{equation}
  Z_t \;\approx\; \sigma\!\left( \frac{S\, m_{t-1}}{\sqrt{1 + \phi \Sigma_t S^2}} \right),
\end{equation}
and differentiating yields
\begin{equation}\label{eq:gated}
  \mathbb{E}[x_t \mid o_{t:\infty} = 1, r_{1:t-1}] \;=\; m_{t-1} + \Sigma_t\, \tilde{\alpha}_t\, \sigma(-\tilde{\alpha}_t\, m_{t-1}),
\end{equation}
with uncertainty-adjusted precision
\begin{equation}
  \tilde{\alpha}_t \;=\; \frac{S}{\sqrt{1 + \phi \Sigma_t S^2}}.
\end{equation}
The sigmoid factor in Eq.~\eqref{eq:gated} depends on $m_{t-1}$ and breaks the additive exploitation-plus-bonus decomposition that the optimal index admits. We replace this gate by a tunable scale parameter $c \in (0, 1)$, recovering an additive $m + B$ form and yielding the CAUSE index:
\begin{equation}\label{eq:cause}
    \lambda_{\mathrm{CAUSE}} \;=\; m_{t-1} + c\, \Sigma_t\, \tilde{\alpha}_t.
\end{equation}
The parameter $c$ controls the overall magnitude of the exploration bonus and does not affect its monotonicities in $s$ or $v$. One principled default emerges from the gate itself: under the symmetric prior and dynamics (Section~\ref{sec:problem}), the marginal distribution of $m_{t-1}$ across histories is symmetric about zero, and the sigmoid satisfies $\sigma(-u) + \sigma(u) = 1$, so the prior-averaged value of the gate is $\tfrac{1}{2}$. We fix $c = \tfrac{1}{2}$ throughout this paper.

%% file: Appendix_C.tex
\newpage
\section{Experimental setup}
\label{app:setup}

This appendix provides implementation details for the experiments of Section~\ref{sec:experiments}. The scale parameter $c$ (Eq. \ref{eq:index_closed}) was held at 0.5 across all experiments and not tuned per-condition; reported regret reflects this fixed setting.

\subsection{Baseline policies}
\label{app:setup_baselines}

All baselines use the same Kalman tracker as CAUSE, with arm-specific known $(v_k, s_k)$ for filtering. We let $m_k$ and $P_k$ denote the agent's posterior mean and variance for arm $k$ at the current decision step.

\paragraph{Myopic.}
Selects $\arg\max_k m_k$ at each step. No exploration bonus.

\paragraph{Thompson sampling \citep{thompson1933likelihood}.}
At each step, samples $\tilde{x}_k \sim \mathcal{N}(m_k, P_k + v_k)$ from each arm's predictive distribution over the next latent state and selects $\arg\max_k \tilde{x}_k$. The variance $P_k + v_k$ is the posterior uncertainty over the arm's mean reward at the next step, including the contribution of latent drift.

\paragraph{Predictive sampling \citep{liu2023nonstationary}.}
Predictive sampling differs from Thompson sampling in its learning target: rather than sampling from the posterior over the current latent state, it samples from a distribution that deprioritizes information whose value will decay due to drift. Proposition~3 of \citet{liu2023nonstationary} reduces the AR(1) Gaussian case to Gaussian sampling with the same predictive mean $m_k$ but a shrunken sampling variance $\tilde{\sigma}^2_k$. Specializing their formula to the random-walk dynamics ($\gamma_{\mathrm{AR}} = 1$) used in this paper gives
\begin{equation}\label{eq:ps_variance}
  \tilde{\sigma}^2_k = \frac{(P_k + v_k)^2}{(P_k + v_k) + x^*_k},
  \qquad
  x^*_k = \tfrac{1}{2}\bigl(v_k + \sqrt{v_k^2 + 4\, v_k\, s_k}\bigr),
\end{equation}
and the agent samples $\tilde{x}_k \sim \mathcal{N}(m_k, \tilde{\sigma}^2_k)$ and selects $\arg\max_k \tilde{x}_k$. The shrinkage factor $x^*_k$ is increasing in both $v_k$ and $s_k$, so $\tilde{\sigma}^2_k \le P_k + v_k$ with equality only in the stationary limit $v_k \to 0$, in which predictive sampling and Thompson sampling coincide \citep[Theorem~1]{liu2023nonstationary}.

\citet{liu2023nonstationary} state their AR(1) results under the assumption $\gamma_{\mathrm{AR}} < 1$, ensuring the latent process admits a steady-state distribution. The random-walk dynamics studied here correspond to the $\gamma_{\mathrm{AR}} \to 1$ limit, in which Eq.~\eqref{eq:ps_variance} remains well-defined as the natural extension of their formula, although the accompanying Bayesian regret bound is not directly applicable in this limit.

\paragraph{UCB \citep{auer2002finite, kaufmann2012bayesian}.}
Selects $\arg\max_k \bigl[m_k + c\sqrt{P_k + v_k}\bigr]$ with $c = 2$. This is a Bayesian adaptation of UCB in which the posterior standard deviation over the next-step latent state plays the role of the confidence radius; with $c = 2$, the bonus corresponds to the upper bound of an approximately 97.7\% one-sided Gaussian credible interval. The bonus depends on $P_k + v_k$ but not on $s_k$, so this UCB variant cannot differentiate informative drift from uninformative observation noise.

\paragraph{Oracle.}
Has access to the true latent states $x_k$ and selects $\arg\max_k x_k$ at each step. Provides a lower bound on regret.

\subsection{Computing the optimal Gittins bonus}
\label{app:setup_gittins}
The optimal Gittins bonus is computed by solving the retirement problem of Section~\ref{sec:optimal} numerically, via value iteration with bisection on the retirement salary $\lambda$.

\paragraph{Value iteration.}
For a fixed $\lambda$, the value function $V(m, P)$ satisfies the Bellman equation
\begin{equation}
  V(m, P) = \max\left\{\frac{\lambda}{1 - \gamma},\; m + \gamma\, \mathbb{E}_{m'}[V(m', P')]\right\},
\end{equation}
where $(m', P')$ are the posterior moments after one Kalman update. The next-step posterior variance $P' = (P + v) s / (P + v + s)$ is computed in closed form via the Kalman recursion. The expectation over $m'$ is evaluated via Gauss--Hermite quadrature with 15 nodes, with $m'$ centered at $m$ and standard deviation $(P + v)/\sqrt{P + v + s}$. Value iteration is run on a grid of 251 evenly-spaced $m$ values spanning $[-m_{\text{range}}, m_{\text{range}}]$, with $m_{\text{range}} = 6 \sqrt{P_{\max} + v} / \sqrt{1 - \gamma}$, where $P_{\max}$ is the largest value in the $P$-grid. The $P$-grid is logarithmically spaced from $0.01$ to $\max\{50 + 10 H v_{\max},\; 5(v_{\max} + s_{\max})\}$ with 30 points, where $H = 1/(1-\gamma)$ is the effective horizon. The constant $50$ is set conservatively above the bandit's prior variance ($P_0 = 25$) to ensure the grid spans the simulation's posterior range. Iteration terminates when the maximum value-function change between iterations falls below $10^{-4}$, or after 300 iterations.

\paragraph{Bisection on $\lambda$.}
For each value of $P$, the bonus is the smallest $\lambda$ such that immediate retirement at $m = 0$ is optimal: $V(0, P) \le \lambda / (1 - \gamma)$. We find this $\lambda$ via bisection on the interval $[0, \lambda_{\max}]$, with $\lambda_{\max} = 10 \sqrt{P_{\max} + v} / \sqrt{1 - \gamma}$. Bisection runs for 20 iterations or until the bracket width falls below $10^{-3}$. The reported bonus is the midpoint of the final bracket.

\paragraph{Bonus comparison figure.}
For the bonus shape comparison (Section~\ref{sec:bonus_gittins}), which evaluates the bonus only at fixed $P_{\text{ref}}$, we use a tighter local $P$-grid (linearly spaced from $0.5 P_{\text{ref}}$ to $2 P_{\text{ref}}$ with 20 points), 121 $m$-grid points, 11 quadrature nodes, and at most 200 value-iteration steps; the other tolerances are unchanged. The reduced settings preserve shape accuracy at $P_{\text{ref}}$ while reducing computation per $(v, s)$ point.

\subsection{Lesion analysis}
\label{app:setup_lesion}
The joint inference of $(\hat{v}, \hat{s})$, the agent's running estimates of volatility and stochasticity, uses the framework of \citet{piray2021model}, with each agent differing only in two sensitivity parameters $\lambda_v, \lambda_s \in [0, 1]$ that control the update rate of the corresponding estimate. Following \citet{piray2021model}, we set $\lambda_v = \lambda_s = 0.1$ for the healthy agent. Blind agents are defined by setting the corresponding parameter to $0$, removing the agent's ability to attribute experienced noise to that source and pinning the corresponding estimate ($\hat{s}$ or $\hat{v}$) at its initial value. Initial values for both $\hat{v}$ and $\hat{s}$ are set to the midpoint of the true values, following \citet{piray2021model}.

For each lesion variant and each cell of the $(v, s)$ grid, the agent observes 200 trials of reward data generated under the true noise parameters, runs the joint inference, and reports the inferred $(\hat{v}, \hat{s})$ at the end of the trial sequence. These estimates are then used to compute the CAUSE bonus at the fixed reference variance. The framework's reference implementation is available under the MIT license.

\subsection{Bandit configuration}
\label{app:setup_bandit}

All experiments use Gaussian state-space bandits with latent dynamics $x_{t+1} \mid x_t \sim \mathcal{N}(x_t, v)$ and observation model $r_t \mid x_t \sim \mathcal{N}(x_t, s)$, with arm-specific noise parameters $(v_k, s_k)$. Latent states are initialized as $x_0 \sim \mathcal{N}(0, P_0)$ independently for each arm, with $P_0 = 25$. Each arm's true initial mean is therefore drawn from the agent's prior, providing nontrivial reward differentiation across arms regardless of volatility. The discount factor in all main analyses is $\gamma = 0.95$. Sensitivity to $\gamma$ is reported in Appendix~\ref{app:robustness}.

\paragraph{Heterogeneous-arms regret experiment.}
The experiments of Section~\ref{sec:regret} use $K = 4$ arms over $T = 200$ timesteps in three regimes. The \emph{mixed} regime partitions arms equally across the four cells of the $(v, s)$ grid with $v_{\text{low}} = 1$, $v_{\text{high}} = 4$, $s_{\text{low}} = 9$, $s_{\text{high}} = 25$. The \emph{s-dominant} regime fixes $v = 4$ across all arms, with $s = 9$ for two arms and $s = 900$ for two arms. The \emph{v-dominant} regime fixes $s = 25$ across all arms, with $v = 1$ for two arms and $v = 100$ for two arms. Sensitivity to the $(v, s)$ scale, the discount factor $\gamma$, and the number of arms $K$ is reported in Appendix~\ref{app:robustness}.

\paragraph{Bonus comparison experiment.}
The experiment of Section~\ref{sec:bonus_gittins} sweeps one noise parameter while holding the other fixed at the high value of the mixed regime ($v = 4$ when sweeping $s$; $s = 25$ when sweeping $v$). Each axis is swept over the range $[10, 1000]$ at 14 logarithmically spaced points. The posterior variance is held at $P_{\text{ref}}$, set separately for each axis to the median, across the swept range, of the stationary posterior variance $P_{\infty}(v, s) = (\sqrt{v^2 + 4vs} - v)/2$ implied by the Kalman recursion.

\paragraph{Rested verification at $v=0$.}
\label{app:v0_verification}
The verification of Section~\ref{sec:rested_verification} sets $v = 0$ for all $K = 4$ arms and tests two stochasticity configurations: $s \in \{9, 25\}$ (the $s$ values from the mixed regime) and $s \in \{9, 900\}$ (from the s-dominant regime), with two arms at each $s$ level. Other settings ($T = 200$, $\gamma = 0.95$, $1000$ Monte Carlo runs) match Section~\ref{sec:regret}.

\paragraph{Lesion analysis.}
The lesion analysis of Section~\ref{sec:lesion} uses the same $(v, s)$ grid as the mixed regime ($v \in \{1, 4\}$, $s \in \{9, 25\}$, one arm per cell, $K = 4$). The reference posterior variance $P_{\text{ref}}$ is the median across the four arms of the stationary posterior variance $P_{\infty}(v, s) = (\sqrt{v^2 + 4vs} - v)/2$ implied by the Kalman recursion. Other settings ($T = 200$, $\gamma = 0.95$, $1000$ Monte Carlo runs) match Section~\ref{sec:regret}.

\paragraph{Monte Carlo procedure.}
All experiments are averaged over 1000 independent runs with different random seeds. Reported quantities are mean $\pm$ standard error of the mean (SEM).

\subsection{Compute resources}
\label{app:compute}

All experiments were run on CPUs. Each independent run completes in seconds, and full experiments (1000 Monte Carlo runs across all conditions) complete in a few minutes per experiment on a standard laptop. Total compute for all experiments and preliminary explorations is well under one CPU core day.

%% file: Appendix_D.tex
\newpage
\section{Extended main empirical results}
\label{app:supp_results}
This appendix presents supplementary results extending the main empirical claims of Sections~\ref{sec:bonus_gittins} and \ref{sec:rested_verification}.

\subsection{Bonus shape on the volatility axis}
\label{app:bonus_v}
Figure~\ref{fig:bonus_v} extends the bonus comparison of Section~\ref{sec:bonus_gittins} to the volatility axis: each policy's bonus as a function of $v$, with $s$ fixed at $25$ and $P_{\text{ref}}$ as described in Appendix~\ref{app:setup_bandit}.

\begin{figure}[h]
  \centering
  \includegraphics[width=0.5\linewidth]{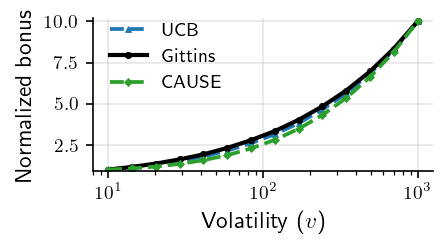}
  \caption{Exploration bonus as a function of volatility $v$ at fixed $s = 25$, normalized to a common range. CAUSE, UCB, and Gittins all show the correct increasing direction; CAUSE lies modestly below Gittins.}
  \label{fig:bonus_v}
\end{figure}

All three policies increase with $v$, as the structural analysis prescribes. CAUSE lies modestly below Gittins, reflecting the additional volatility-aware damping (the $D$-factor; see Appendix~\ref{app:derivation}) that accounts for the diminishing value of distant-future information when latent states drift. UCB tracks Gittins closely along this axis since UCB's $\sqrt{P + v}$ scaling implicitly captures the volatility dependence, even though it does not capture stochasticity dependence (Section~\ref{sec:bonus_gittins}). The differences along the volatility axis are subtle in bonus shape but produce substantial differences in regret in the v-dominant regime (Section~\ref{sec:regret}).

\subsection{Rested verification at extreme stochasticity}
\label{app:v0_verification}
Section~\ref{sec:rested_verification} reports the rested verification ($v = 0$) at moderate stochasticity heterogeneity ($s \in \{9, 25\}$). Figure~\ref{fig:v0_extreme} extends the verification to extreme stochasticity heterogeneity ($s \in \{9, 900\}$), matching the high-$s$ value of the s-dominant regime.

\begin{figure}[H]
  \centering
  \includegraphics[width=0.85\linewidth]{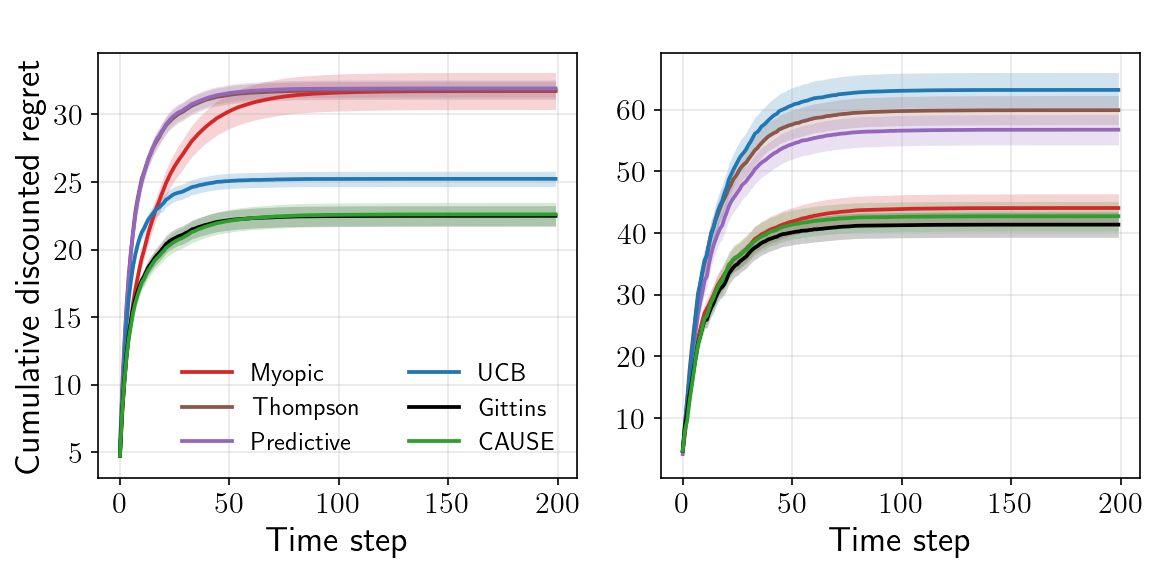}
  \caption{Cumulative discounted regret at $v = 0$ for $K = 4$ arms over $T = 200$ steps, $\gamma = 0.95$, $1000$ Monte Carlo runs. \emph{Left}: $s \in \{9, 25\}$. \emph{Right}: $s \in \{9, 900\}$. CAUSE and Gittins-per-arm overlap within Monte Carlo precision in both configurations.}
  \label{fig:v0_extreme}
\end{figure}

At extreme stochasticity, CAUSE achieves $42.71 \pm 2.40$ and Gittins-per-arm achieves $41.36 \pm 2.04$ (mean $\pm$ SEM, 1000 runs); the difference of $1.35$ is well within the combined sampling error of $\sim 3.15$. The closeness of CAUSE to the rested-optimal Gittins reference is therefore preserved across the stochasticity range tested in the main experiments.

%% file: Appendix_E.tex
\newpage
\section{Robustness}
\label{app:robustness}

\subsection{Discount factor}
\label{app:gamma}
We replicate the mixed-regime experiment at three additional discount factors, $\gamma \in \{0.8, 0.9, 0.98\}$. The qualitative ordering of policies is preserved across all values: CAUSE achieves the lowest regret, followed by Gittins-per-arm, with UCB and the posterior-sampling baselines further behind.

\begin{figure}[H]
  \centering
  \includegraphics[width=\linewidth]{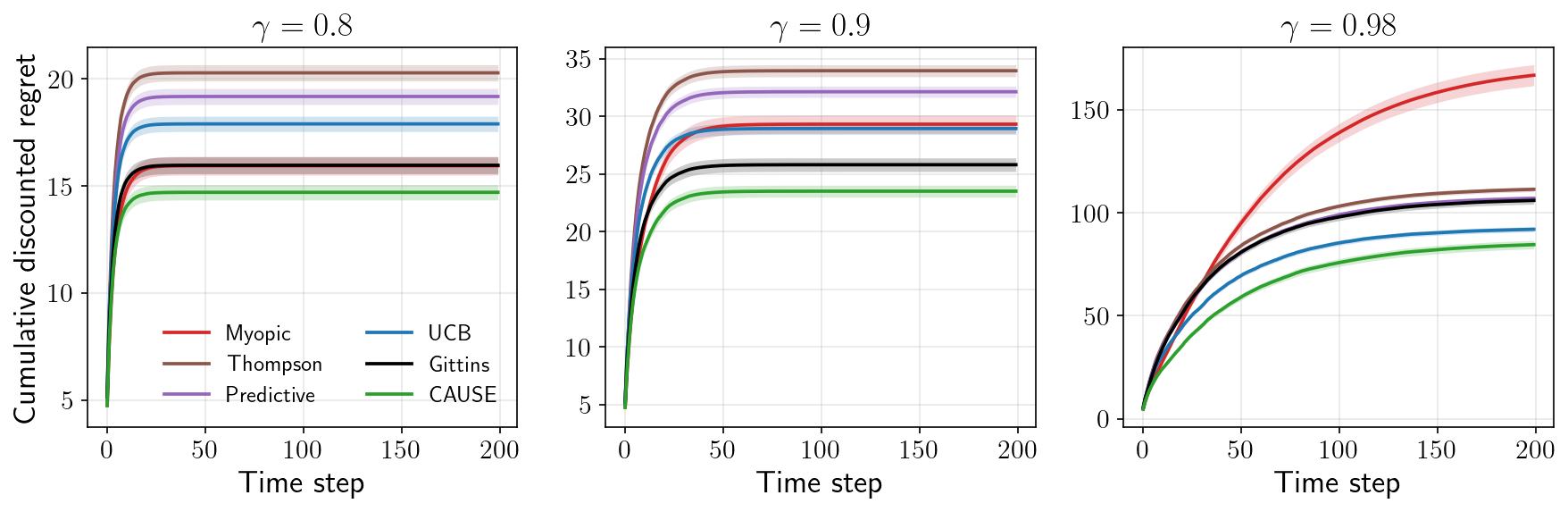}
  \caption{Cumulative discounted regret in the mixed regime across discount factors $\gamma \in \{0.8, 0.9, 0.98\}$ ($K = 4$, $T = 200$, $1000$ Monte Carlo runs).}
  \label{fig:gamma_sweep}
\end{figure}

\subsection{Number of arms}
\label{app:K}
We replicate the mixed-regime experiment at $K \in \{8, 12, 16\}$ arms, with arms distributed equally across the four cells of the $(v, s)$ grid. The CAUSE-Gittins gap, which favors CAUSE at $K = 4$ (Section~\ref{sec:regret}), narrows as $K$ grows: at $K = 8$, CAUSE remains modestly ahead; at $K = 12$, the two policies are statistically tied; at $K = 16$, Gittins-per-arm is modestly ahead. CAUSE uses a fixed $c = 0.5$ (Equation \ref{eq:index_closed}) across all conditions; the narrowing trend suggests that allowing $c$ to vary with $K$ could preserve the lead at larger $K$, but we do not pursue this tuning. The qualitative advantages of CAUSE over UCB and the posterior-sampling baselines are preserved at all values of $K$.

\begin{figure}[h]
  \centering
  \includegraphics[width=\linewidth]{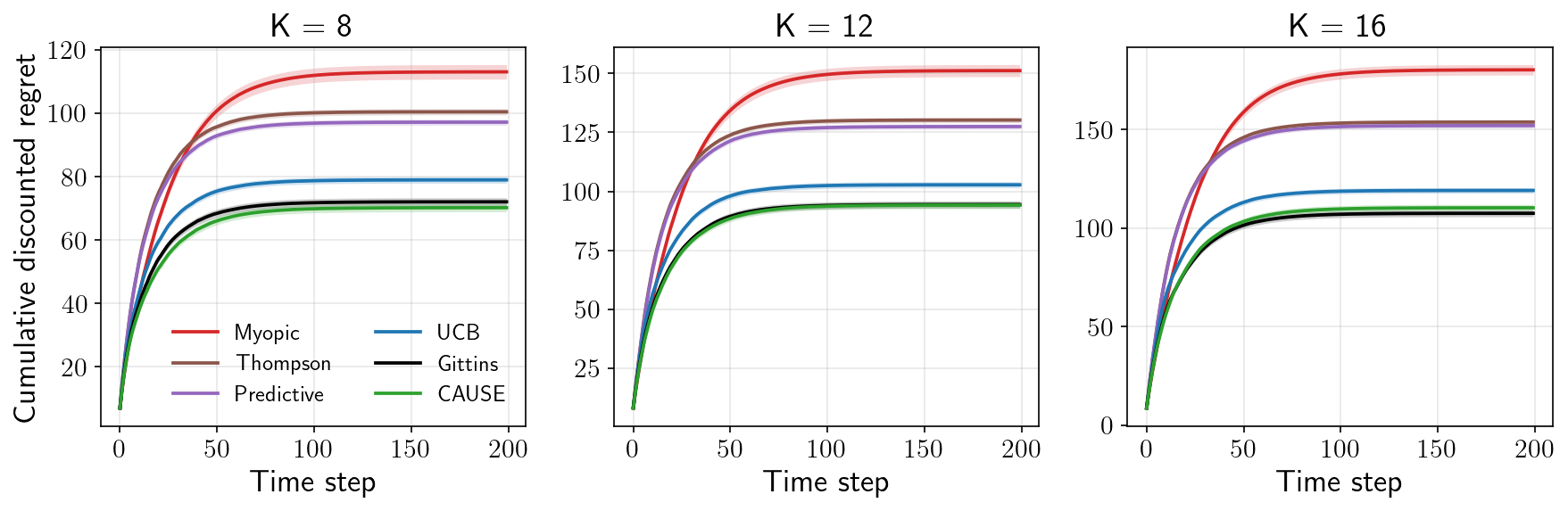}
  \caption{Cumulative discounted regret in the mixed regime at $K \in \{8, 12, 16\}$ ($T = 200$, $\gamma = 0.95$, $1000$ Monte Carlo runs).}
  \label{fig:K_sweep}
\end{figure}

\subsection{UCB exploration constant}
\label{app:ucb_c}
The UCB baseline in Section~\ref{sec:regret} uses the canonical $c = 2$ scaling, corresponding to the standard $2\sigma$ upper confidence bound of Bayesian UCB \citep{kaufmann2012bayesian}. UCB allocates exploration in proportion to $c \sqrt{P + v}$, scaling with predictive variance and not with stochasticity $s$. This structural insensitivity to $s$ means that no choice of $c$ can produce the s-axis discounting that the optimal exploration bonus exhibits (Sections~\ref{sec:optimal} and \ref{sec:bonus_gittins}); UCB therefore over-explores high-$s$ arms regardless of the global exploration scale.

We verify this structurally by sweeping $c \in \{0.5, 1, 2, 3\}$ across the three regimes of Section~\ref{sec:regret}. Two patterns emerge. First, the optimal $c$ for UCB varies across regimes: the best-performing value is smaller in s-dominant ($c = 0.5$) than in v-dominant or mixed regimes ($c = 1$). A single fixed $c$ therefore cannot be globally optimal for UCB. Second, even at its best per-regime $c$, UCB does not match CAUSE in any regime, with the gap largest in s-dominant. The structural insensitivity is visible directly in the bonus comparison (Figure~\ref{fig:bonus}, left): UCB's bonus is exactly flat as a function of $s$, while CAUSE and Gittins decrease monotonically. This is a structural feature of UCB's allocation rule and cannot be repaired by rescaling $c$.

\begin{figure}[H]
  \centering
  \includegraphics[width=\linewidth]{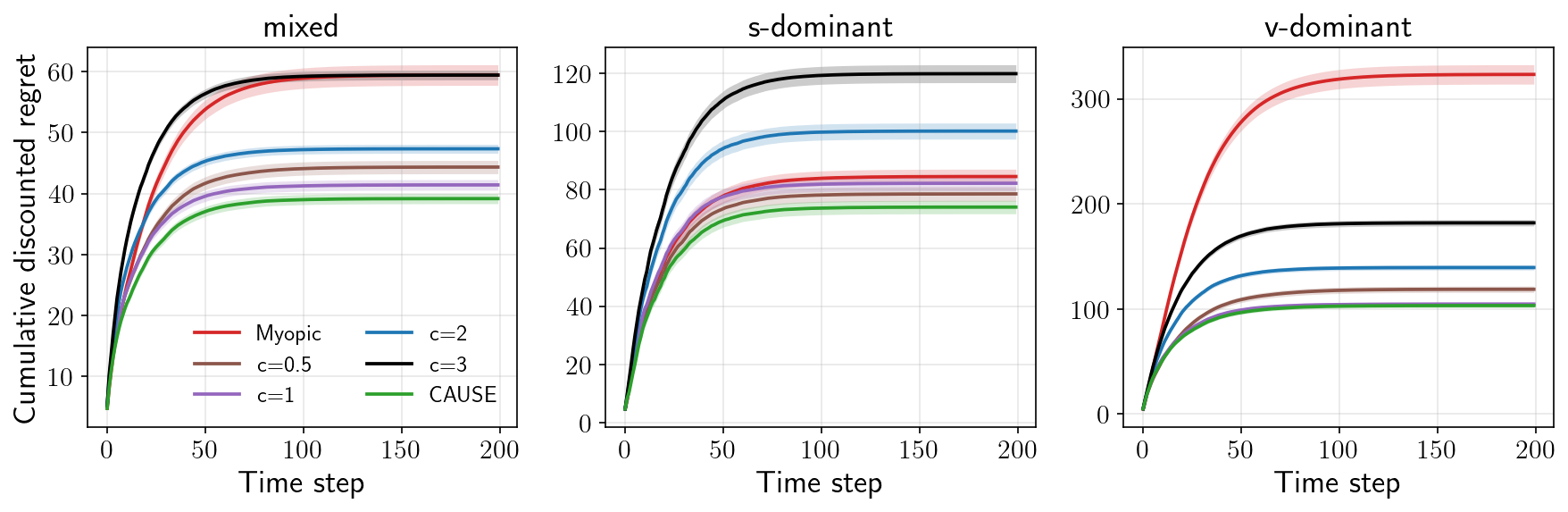}
  \caption{UCB regret across the three regimes for $c \in \{0.5, 1, 2, 3\}$, with CAUSE shown for comparison ($K = 4$, $T = 200$, $\gamma = 0.95$, $1000$ Monte Carlo runs).}
  \label{fig:ucb_c_sweep}
\end{figure}